\def\addcontentsline#1#2#3{}
\def\maketitle{\par
\begingroup
   \def\thefootnote{\fnsymbol{footnote}}
   \def\@makefnmark{\hbox to 0pt{$^{\@thefnmark}$\hss}} 
   \long\def\@makefntext##1{\parindent 1em\noindent
                            \hbox to1.8em{\hss $\m@th ^{\@thefnmark}$}##1}
   \@maketitle \@thanks
\endgroup
\setcounter{footnote}{0}
\let\maketitle\relax \let\@maketitle\relax
\gdef\@thanks{}\gdef\@author{}\gdef\@title{}\let\thanks\relax}
\renewenvironment{abstract}{\vskip.075in\centerline{\large\sc
Abstract}\vspace{0.5ex}\begin{quote}}{\par\end{quote}\vskip 1ex}
\def\section{\@startsection {section}{1}{\z@}{-2.0ex plus
    -0.5ex minus -.2ex}{1.5ex plus 0.3ex
minus0.2ex}{\large\sc\raggedright}}
\def\subsection{\@startsection{subsection}{2}{\z@}{-1.8ex plus
-0.5ex minus -.2ex}{0.8ex plus .2ex}{\normalsize\sc\raggedright}}
\def\subsubsection{\@startsection{subsubsection}{3}{\z@}{-1.5ex
plus      -0.5ex minus -.2ex}{0.5ex plus
.2ex}{\normalsize\sc\raggedright}}
\def\paragraph{\@startsection{paragraph}{4}{\z@}{1.5ex plus
0.5ex minus .2ex}{-1em}{\normalsize\bf}}
\def\subparagraph{\@startsection{subparagraph}{5}{\z@}{1.5ex plus
  0.5ex minus .2ex}{-1em}{\normalsize\sc}}
\def\footnoterule{\kern-3pt \hrule width 12pc \kern 2.6pt }
\def\@listi{\leftmargin\leftmargini}
\def\@listii{\leftmargin\leftmarginii
   \labelwidth\leftmarginii\advance\labelwidth-\labelsep
   \topsep 2pt plus 1pt minus 0.5pt
   \parsep 1pt plus 0.5pt minus 0.5pt
   \itemsep \parsep}
\def\@listiii{\leftmargin\leftmarginiii
    \labelwidth\leftmarginiii\advance\labelwidth-\labelsep
    \topsep 1pt plus 0.5pt minus 0.5pt
    \parsep \z@ \partopsep 0.5pt plus 0pt minus 0.5pt
    \itemsep \topsep}
\def\@listiv{\leftmargin\leftmarginiv
     \labelwidth\leftmarginiv\advance\labelwidth-\labelsep}
\def\@listv{\leftmargin\leftmarginv
     \labelwidth\leftmarginv\advance\labelwidth-\labelsep}
\def\@listvi{\leftmargin\leftmarginvi
     \labelwidth\leftmarginvi\advance\labelwidth-\labelsep}
\def\normalsize{\@setsize\normalsize{11pt}\xpt\@xpt}
\def\small{\@setsize\small{10pt}\ixpt\@ixpt}
\def\footnotesize{\@setsize\footnotesize{10pt}\ixpt\@ixpt}
\def\scriptsize{\@setsize\scriptsize{8pt}\viipt\@viipt}
\def\tiny{\@setsize\tiny{7pt}\vipt\@vipt}
\def\large{\@setsize\large{14pt}\xiipt\@xiipt}
\def\Large{\@setsize\Large{16pt}\xivpt\@xivpt}
\def\LARGE{\@setsize\LARGE{20pt}\xviipt\@xviipt}
\def\huge{\@setsize\huge{23pt}\xxpt\@xxpt}
\def\Huge{\@setsize\Huge{28pt}\xxvpt\@xxvpt}
\theoremstyle{plain}
\newtheorem{theorem}{Theorem}
\newtheorem{lemma}[theorem]{Lemma}
\newtheorem*{klemma}{Key Lemma}
\newtheorem{remark}[theorem]{Remark}
\newtheorem*{observation*}{Observation}
\newtheorem{observation}{Observation}
\newtheorem{example}{Example}
\newcommand{\ldbl}{\{\!\!\{}
\newcommand{\rdbl}{\}\!\!\}}
\newcommand{\Bldbl}{\bigl\{\!\!\bigr\{}
\newcommand{\Brdbl}{\bigl\}\!\!\bigr\}}
\newcommand{\Rb}{\mathbb{R}}
\newcommand{\Nb}{\mathbb{N}}
\newcommand{\countinglogic}[1]{\mathsf{C}_{#1}}
\newcommand{\ign}[1]{\mathsf{#1\text{-}IGN}}
\newcommand{\igns}[1]{\mathsf{#1\text{-}IGN}\text{s}}
\newcommand{\mlp}{\mathsf{MLP}}
\newcommand{\wl}[1]{\mathsf{#1\text{-}WL}}
\newcommand{\fwl}[1]{\mathsf{#1\text{-}FWL}}
\newcommand{\MPNN}{\mathsf{MPNN}}
\newcommand{\MPNNs}{\mathsf{MPNN}\text{s}}
\newcommand{\GNN}{\mathsf{GNN}}
\newcommand{\GNNs}{\mathsf{GNN}\text{s}}
\newcommand{\PPGNs}[1]{\mathsf{#1\text{-}PPGN}\text{s}}
\newcommand{\gns}[1]{\mathsf{#1\text{-}GNN}\text{s}}
\newcommand{\free}[1]{\mathsf{free}(#1)}
\title{The expressive power of $k$th-order invariant graph networks}
\author{Floris Geerts\\
University of Antwerp\\
\texttt{floris.geerts@uantwerp.be}}
\begin{document}

\maketitle

\begin{abstract}
The expressive power of graph neural network formalisms is commonly measured by their ability to distinguish graphs. For many formalisms, the $k$-dimensional Weisfeiler-Leman ($\wl{k}$) graph isomorphism test is used as a yardstick. In this paper we consider the expressive power of $k$th-order invariant (linear) graph  networks ($\igns{k}$). It is known that $\igns{k}$ are expressive enough to simulate $\wl{k}$. This means that for any two graphs that can be distinguished by $\wl{k}$, one can find a $k$-IGN which also distinguishes those graphs. The question remains whether $\igns{k}$ can distinguish more graphs than $\wl{k}$. This was recently shown to be false for $k=2$. Here, we generalise this result to arbitrary $k$. In other words, we show that $\igns{k}$ are bounded in expressive power by $\wl{k}$. This implies that $\igns{k}$ and $\wl{k}$ are equally powerful in distinguishing graphs.
\end{abstract}

\section{Introduction}\label{sec:introduction}
Graph neural networks ($\GNNs$) have become a standard means to analyse graph data. One of the most widely adopted $\GNN$ formalisms are the so-called message-passing neural networks ($\MPNNs$)~\citep{scarselli2008graph,GilmerSRVD17}. In $\MPNNs$, features of vertices are iteratively updated based on the features of neighbouring vertices, and the current feature of the vertex itself. In their simplest form, when only the features of vertices are taken into account, the capability of $\MPNNs$ to distinguish vertices and graphs is rather limited. Indeed, \citet{xhlj19} and \citet{grohewl} show that the expressive power of $\MPNNs$ is bounded by the 1-dimensional (Folklore) Weisfeiler-Leman ($\fwl{1}$) graph isomorphism test \citep{CaiFI92}, or equivalently, the 2-dimensional Weisfeiler-Leman ($\wl{2}$) test \citep{grohe_otto_2015,grohe_2017}\footnote{In works related to Weisfeiler-Leman one has to carefully consider whether or not the Folklore $\mathsf{WL}$ test is used. That is, in some papers, $\wl{1}$ refers to $\fwl{1}$. For general $k$, $\fwl{k}$ is equivalent to $\wl{(k+1)}$~\citep{grohe_otto_2015}.}.  That is, when two graphs cannot be distinguished by $\wl{2}$, then neither can they be distinguished by any $\MPNN$.  The expressive power of $\wl{2}$ is well-understood. For example, when two graphs cannot be distinguished by $\wl{2}$ then they can also not be distinguished by sentences in the two-variable fragment, $\countinglogic{2}$, of first-order logic with counting. More  relevant in the context of $\GNNs$ is the complete characterisation of $\wl{2}$ in terms of invariant graph properties~\citep{Furer17,ARVIND202042}. For example, $\wl{2}$ is unable to detect cycles of length greater than four or triangles in graphs. We also like to point out connections between $\wl{2}$ and homomorphism profiles. More specifically, two graphs are indistinguishable by $\wl{2}$ if and only if they have the same number of homomorphisms from graphs of treewidth at most one \citep{DellGR18}. Finally, one can rephrase indistinguishability by $\wl{2}$ in terms of agreement of functions defined in terms of linear algebra operators \citep{Geerts19}. \looseness=-1

The limited expressive power of $\MPNNs$ is primarily due to the fact that vertices are anonymous, i.e., two vertices with the same feature are regarded as equivalent, and that only neighbouring vertices are considered. When, for example, $\MPNNs$ are degree-aware, meaning that they can distinguish vertices based on both their features and degrees, $\MPNNs$ get a slight jump start when compared to $\wl{2}$ and can potentially distinguish graphs in one iteration earlier than $\wl{2}$~\citep{geerts2020let}. Notable examples of degree-aware $\MPNNs$ are the graph convolutional networks by \citet{kipf-loose}. More powerful variants of $\MPNNs$ can be obtained by incorporating port numbering, which allows to treat features from different neighbours differently~\citep{sato2019approximation}, assigning random initial features~\citep{sato2020random}, and having static vertex identifiers \citep{Loukas2020What}. We refer \citet{Sato2020ASO} for a more detailed overview of these and other variations of $\MPNNs$.

Instead of considering $\wl{2}$ or variations of standard $\MPNNs$, this paper concerns $\GNNs$ inspired by the $k$-dimensional Weisfeiler-Leman ($\wl{k}$) graph isomorphism test, for $k\geq 2$. These tests iteratively update features of $k$-tuples of vertices, based on the features of neighbouring $k$-tuples of vertices. It is known that the expressive power of $\wl{k}$ grows with increasing $k$~\citep{CaiFI92}. As such, they provide a promising basis for the development of more expressive $\GNNs$. Of particular interest is the ability of $\wl{k}$, for $k\geq 2$, to distinguish graphs based on the presence or absence of specific graph patterns, such as cycles and cliques. For example, $\wl{3}$ can distinguish graphs based on their number of cycles of length up to $7$ and triangles~\citep{Furer17,Geerts19,ARVIND202042}. Furthermore, graphs that are indistinguishable by $\wl{k}$ satisfy the same sentences in $\countinglogic{k}$, the $k$-variable fragment of first-order logic with counting \citep{CaiFI92}, and this in turn is equivalent to the two graphs having the same number of homomorphisms from graphs of treewidth at most $k-1$ \citep{DellGR18}. The latter correspondence has led \citet{nt2020graph} to define $\GNNs$ based on graph homomorphism convolutions.
We refer to \citet{Grohe20} for other interesting interpretations of $\wl{k}$ and relationships to embeddings of graph, and more generally, structured data.
\looseness=-1

Given the promise of an increase in expressive power, \citet{grohewl} propose $\gns{k}$ based a set-variant of $\wl{k}$. We will not consider this set-variant of $\wl{k}$ in this paper and only mention that $\gns{k}$ match the set-variant of $\wl{k}$ in expressive power. More relevant to this paper is the work by \citet{DBLP:conf/nips/MaronBSL19} in which it is shown that the  class of $k$th-order invariant graph networks ($\igns{k}$) is as powerful as $\wl{k}$ in expressive power, for each $k\geq 2$. In other words, when two graphs can be distinguished by $\wl{k}$, then there exists a $\ign{k}$ which also distinguishes those graphs. Invariant graph networks ($\ign{k}$) are built-up from equivariant layers defined over $k$th-order tensors \citep{kondor2018covariant,maron2018invariant}. By contrast to $\wl{k}$, $\igns{k}$ update features of $k$-tuples of vertices based on the features of \textit{all} $k$-tuples, i.e., not only those that are neighbours as in $\wl{k}$. As a consequence, it is not immediately clear that $\igns{k}$ are bounded by $\wl{k}$ in expressive power.
We remark, however, that in a $\ign{k}$, not all (features of) $k$-tuples are treated the same due to the equivariance of its layers. More precisely, given a $k$-tuple $\bar v$ of vertices,  the space of all $k$-tuples of vertices is partitioned according to which equality and inequality conditions are satisfied together with $\bar v$. Then, during the feature update process of $\bar v$, two $k$-tuples of vertices with the same feature may be treated differently by a $\ign{k}$ if the two $k$-tuples belong to different parts of the partition relative to $\bar v$.
\looseness=-1

\citet{openprob} raise the natural question whether, despite that $\igns{k}$ use more information than $\wl{k}$, the expressive power of $\igns{k}$ is still limited to that of $\wl{k}$. In other words, can there be graphs that can be distinguished by a $\ign{k}$ which cannot be distinguished by $\wl{k}$. This question was recently answered by \citet{chen2020graph} for $k=2$. More precisely, they show that, for undirected graphs, the expressive power of $\igns{2}$ is indeed bounded by $\wl{2}$. Furthermore, there is a one-to-one correspondence between the layers in a $\ign{2}$ and iterations in $\wl{2}$. That is, when two graphs cannot by distinguished by $\wl{2}$ in $t$ iterations, then neither can they be distinguished by a $\ign{2}$ using $t$ equivariant layers.

In this paper, we generalise this result to arbitrary $k$. More precisely, we show that the expressive power of $\igns{k}$ is indeed bounded by $\wl{k}$. What is interesting to note is that the one-to-one correspondence between iterations of $\wl{k}$ and layers in $\igns{k}$ needs to be revisited. As it turns out, for general $k$, each layer of a $\ign{k}$ can be seen to correspond to $k-1$ iterations by $\wl{k}$. We remark that when $k=2$, the one-to-one correspondence from \citet{chen2020graph} is recovered. This implies that, in principle, a $\ign{k}$ can distinguish graphs a factor of $k-1$ faster  compared to $\wl{k}$. Of course, this comes at a cost of a more intensive feature update process involving all $k$-tuples of vertices. \citet{chen2020graph} establish their result for $k=2$ in a pure combinatorial way and by means of a case analysis, which is feasible for a fixed $k$. For general $k$, we
borrow ideas from \citet{chen2020graph}  but additionally rely on the known connection between $\wl{k}$ and the logic $\countinglogic{k}$ mentioned earlier. We remark that connections with logic, $\MPNNs$ and $\wl{2}$ have been used before to assess the logical expressiveness of $\MPNNs$ \citep{barcelo2019logical}.

We also remark that $\igns{k}$ incur a large cost in memory and computation. Alternatives to $\igns{k}$ are put forward based on the folklore $k$-dimensional Weisfeiler-Leman ($\fwl{k}$) test, which is known to 
be more efficient to implement. For example, \citet{DBLP:conf/nips/MaronBSL19} propose provably powerful graph networks ($\PPGNs{k}$) that are able to simulate $\fwl{k}$ (and thus $\wl{(k+1)}$) by using $k$th-order tensors only but in which the layers  are allowed to use tensor multiplication.  For $\fwl{2}$, a single matrix multiplication suffices. The impact of matrix multiplication in layers has been further investigated in \citet{geerts2020walk}. In that work, inspired by the work of \citet{LichterPS19}, walk $\MPNNs$ are proposed as a general formalism for $\PPGNs{2}$. It is readily verified that walk $\MPNNs$ are bounded in expressive power by $\fwl{2}$, and since $\PPGNs{2}$ can be seen as instances of walk $\MPNNs$, they are bounded in expressive power by $\fwl{2}$ as well \citep{geerts2020walk}. This has been generalised by \citet{azizian2020characterizing} who show that $\PPGNs{k}$ are bounded by $\fwl{k}$, for arbitrary $k$. We also note that allowing more than one matrix multiplication in  $\PPGNs{2}$ does not increase their expressive power. Instead, multiple matrix multiplications may result in that $\PPGNs{2}$ can distinguish graphs faster than $\fwl{2}$ \citep{geerts2020walk}.   In this paper, we only consider $\igns{k}$ and $\wl{k}$.
\looseness=-1
\paragraph{Structure of the paper.}
We start by describing $\wl{k}$, $\countinglogic{k}$ and $\igns{k}$ in Section~\ref{sec:background}. 
Then, in Section~\ref{sec:background} we prove that $\igns{k}$ are bounded by $\wl{k}$ in expressive power.
We conclude in Section~\ref{sec:conclude}.

\section{Background}\label{sec:background}
We first describe $\wl{k}$ and its connections to logic, followed by the definition of $\igns{k}$.
We use $\{\,\}$ to denote sets and $\ldbl\, \rdbl$ to denote multisets. The sets of natural and real numbers are denoted by $\Nb$ and $\Rb$, respectively. For $n\in\Nb$
with $n>0$, we define $[n]:=\{1,\ldots,n\}$. A (directed) graph $G=(V(G),E(G))$ consists of a vertex set $V(G)$ and edge set $E(G)\subseteq V^2$. A (vertex-)coloured graph $G=(V(G),E(G),\chi_G)$ is a graph in which every 
vertex $v\in V(G)$ is assigned a colour $\chi_G(v)$ in some set $\mathcal{C}$ of colours. In the following, when we refer to graphs we always mean coloured graphs. Without loss of generality we assume that $V(G)=[n]$ for some $n\in\Nb$. Furthermore, if $\mathbf{A}\in \Rb^{n^k\times p}$ is a $k$th-order tensor, then we denote by 
$\mathbf{A}_{\bar v, s}\in \Rb$ with $\bar v\in[n]^k$ and $s\in [p]$ the value of $\mathbf{A}$ in entry $(\bar v,s)$, and $\mathbf{A}_{\bar v,\bullet}\in\Rb^p$ denotes the vector $\bigl(\mathbf{A}_{\bar v,s}\bigm| s\in[p]\bigr)$ in $\Rb^p$.

\subsection{Weisfeiler-Leman}
The $k$-dimensional Weisefeiler-Leman ($\wl{k}$) graph isomorphism test iteratively produces colourings of $k$-tuples of vertices, starting from a given graph $G=(V(G),E(G),\chi_G)$. We follow here the presentation as given in~\citet{grohewl}.
Given $G=(V(G),E(G),\chi_G)$, we denote by $\chi_{G,k}^{(t)}:[n]^k\to\mathcal{C}$ the colouring of $k$-tuples generated by $\wl{k}$ after $t$ rounds. For $t=0$, $\chi_{G,k}^{(0)}:[n]^k\to\mathcal{C}$ is a colouring in which each $k$-tuple $\bar v\in [n]^k$ is coloured with the isomorphism type of its induced subgraph. More specifically,
$\chi_{G,k}^{(0)}(v_1,\ldots,v_k)=\chi_{G,k}^{(t)}(v_1',\ldots,v_k')$ if and only if for all $i\in[k]$ we have that $\chi_G(v_i)=\chi_G(v_i')$ and for all $i,j\in[k]$, it holds that $v_i=v_j$ if and only if $v_i'=v_j'$
and $(v_i,v_j)\in E(G)$ if and only if $(v_i',v_j')\in E(G)$. Then, for $t>0$, we define the colouring
$\chi_{G,k}^{(t)}:[n]^k\to\mathcal{C}$ as
$$
\chi_{G,k}^{(t)}(\bar v):=\textsc{Hash}\Bigl(\chi_{G,k}^{(t-1)}(\bar v),\bigl(C_1^{(t)}(\bar v),\ldots,C_k^{(t)}(\bar v)\bigr)\Bigr),
$$
in which for $i\in[k]$,
$$
C_i^{(t)}(\bar v):=\textsc{Hash}\Bigl(\Bldbl \chi_{G,k}^{(t-1)}(\bar v[v_i/v']) \bigm| v'\in [n]\Brdbl\Bigr),
$$
where $\bar v[v_i/v']:=(v_1,\ldots,v_{i-1},v',v_{i+1},\ldots,v_k)$ and $\textsc{Hash}(\cdot)$ is a hash function that maps it input in an injective manner to a colour in $\mathcal{C}$.

Let $\chi_1,\chi_2:[n]^k\to \mathcal{C}$ be colourings of $k$-tuples of vertices in $G$. We say that $\chi_1$ refines $\chi_2$, denoted by $\chi_1\preceq \chi_2$, if for all $\bar v,\bar v'\in [n]^k$ we have $\chi_1(\bar v)=\chi_1(\bar v')\Rightarrow\chi_2(\bar v)=\chi_2(\bar v')$.
When $\chi_1\preceq\chi_2$ and $\chi_2\preceq\chi_1$ hold, we say that $\chi_1$ and $\chi_2$ are equivalent and we denote this by $\chi_1\equiv\chi_2$.

We note that, by definition,
$\chi_{G,k}^{(t)}\preceq \chi_{G,k}^{(t-1)}$ for all $t\geq 1$. We define $\chi_{G,k}$ as $\chi_{G,k}^{(t)}$ for which
$\chi_{G,k}^{(t)}\equiv \chi_{G,k}^{(t+1)}$ holds.  It is known that this ``stable'' colouring is obtained in a most $n^{\mathcal{O}(k)}$ rounds.
For two graphs $G=(V(G),E(G),\chi_G)$ and $H=(V(H),E(H),\chi_H)$, one says that $\wl{k}$ distinguishes $G$ and $H$ in round $t$ if 
$$
\Bldbl\chi_{G,k}^{(t)}(\bar v)\bigm| \bar v\in (V(G))^k\Brdbl\neq
\Bldbl\chi_{H,k}^{(t)}(\bar w)\bigm| \bar w\in (V(H))^k\Brdbl.
$$
We write $G\equiv_{\wl{k}}^t H$ if $\wl{k}$ does not distinguish $G$ and $H$ in round $t$. When
$G\equiv_{\wl{k}}^t H$ for all $t\geq 0$, we write $G\equiv_{\wl{k}} H$ and say that $G$ and $H$ cannot be distinguished by $\wl{k}$.
	
\subsection{Counting logics}
The $k$-dimensional Weisfeiler-Leman graph isomorphism test is closely tied to the $k$-variable fragment of first-order logic with counting, denoted by $\countinglogic{k}$, on graphs. This logic is defined over a finite set of $k$ variables, $x_1,\ldots,x_k$, and a formula $\varphi$ in $\countinglogic{k}$ is  formed according to the following grammar:
$$\varphi ::= x_i=x_j \ \mid \ \mathsf{Col}_c(x_i) \ \mid \ \mathsf{Edge}(x_i,x_j)
\ \mid \ \neg\varphi \!\ \mid \ \varphi_1\land\varphi_2 \ \mid \ \exists^{\geq r} x_i\, \varphi,$$
for $i,j\in[k]$, $c\in\mathcal{C}$, $r\in\Nb$ with $r>0$. The first three cases in the grammar correspond to so-called atomic formulas. For a formula $\varphi$, we define its free variables $\free{\varphi}$ in an inductive way, i.e., $\free{x_i=x_j}:=\{x_i,x_j\}$. 
$\free{\mathsf{Col}_c(x_i)}:=\{x_i\}$ $\free{\mathsf{Edge}(x_i,x_j)}:=\{x_i,x_j\}$, 
$\free{\neg\varphi}:=\free{\varphi}$, $\free{\varphi_1\land \varphi_2}:=\free{\varphi_1}\cup\free{\varphi_2}$, and $\free{\exists^{\geq r}x_i\,\varphi}:=\free{\varphi}\setminus\{x_i\}$. We write $\varphi(x_1,\ldots,x_k)$ to indicate that all free variables of $\varphi$ are among $x_1,\ldots,x_k$. A sentence is formula without free variables. We further need the quantifier rank of a formula $\varphi$, denoted by $\mathsf{qr}(\varphi)$. It is defined as follows: $\mathsf{qr}(\varphi):=0$ if $\varphi$ is atomic, 
$\mathsf{qr}(\neg\varphi):=\mathsf{qr}(\varphi)$, $\mathsf{qr}(\varphi_1\land\varphi_2):=\mathsf{max}\{\mathsf{qr}(\varphi),\mathsf{qr}(\varphi_2)\}$, and 
$\mathsf{qr}(\exists^{\geq r}x_i\,\varphi):=\mathsf{qr}(\varphi)+1$.

Let $G=(V(G),E(G),\chi_G)$ be a graph and let $\varphi(x_1,\ldots,x_k)$ be a formula in $\countinglogic{k}$. Consider an assignment $\alpha$ from the variables  $\{x_1,\ldots,x_k\}$ to vertices in $V(G)$. We denote by
$\alpha(x_i/v)$ for $v\in V(G)$ the assignment which is equal to $\alpha$ except that $\alpha(x_i):=v$.
We define the satisfaction of a formula by a graph, relative to an assignment $\alpha$, denoted by $G\models \varphi[\alpha]$, in an inductive manner. That is, $G\models (x_i=x_j)[\alpha]$ if and only if $\alpha(x_i)=\alpha(x_j)$,
$G\models \mathsf{Col}_c(x_i)[\alpha]$ if and only if $\chi_H(\alpha(x_i))=c$, 
$G\models \mathsf{Edge}(x_i,x_j)[\alpha]$ if and only if $(\alpha(x_i),\alpha(x_j))\in E$, 
$G\models \neg\varphi[\alpha]$ if and only if not $G\models\varphi[\alpha]$, 
$G\models (\varphi_1\land \varphi_2)[\alpha]$ if and only if $G\models\varphi_1[\alpha]$ and $G\models\varphi_2[\alpha]$, and finally, $G\models \exists^{\geq r} x_i\, \varphi[\alpha]$ if and only if there are at least $r$ distinct vertices $v_1,\ldots,v_r$ in $V(G)$ such that 
$G\models\varphi[\alpha(x_i/v_j)]$ holds for all $j\in[r]$.
\looseness=-1

When $G$ and $H$ satisfy the same sentences in $\countinglogic{k}$ of quantifier rank at most $t$, 
we denote this by $G\equiv_{\countinglogic{k}}^t H$. If $G\equiv_{\countinglogic{k}}^t H$ holds for all $t\geq 0$,
then we write $G\equiv_{\countinglogic{k}} H$ and say that $G$ and $H$ are indistinguishable by $\countinglogic{k}$.  The connection to $\wl{k}$ is as follows.
\begin{theorem}[\citep{CaiFI92}] \label{thm:cfi}
Let $G$ and $H$ be two graphs. Then, $G\equiv_{\wl{k}}^t H$ if and only if $G\equiv_{\countinglogic{k}}^t H$.
As a consequence, $G\equiv_{\wl{k}} H$ if and only if $G\equiv_{\countinglogic{k}} H$.\qed
\end{theorem}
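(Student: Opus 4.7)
The plan is to prove both directions by induction on $t$, matching each $\wl{k}$-colour class after $t$ rounds with a definable $\countinglogic{k}$-type of quantifier rank at most $t$. The global statement $G\equiv_{\wl{k}} H\iff G\equiv_{\countinglogic{k}} H$ then follows because the $\wl{k}$ colouring stabilises in finitely many rounds.

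For the direction $G\equiv_{\wl{k}}^t H\Rightarrow G\equiv_{\countinglogic{k}}^t H$, I would prove by induction on quantifier rank that for every $\varphi(x_1,\ldots,x_k)\in\countinglogic{k}$ of rank at most $t$ and every graph $G$, the truth value of $\varphi$ on $\bar v\in V(G)^k$ is a function of $\chi_{G,k}^{(t)}(\bar v)$ alone, where the function does not depend on $G$. Atomic cases are immediate, since $\chi_{G,k}^{(0)}$ records exactly the equalities $v_i=v_j$, the colours $\chi_G(v_i)$, and the edges $(v_i,v_j)\in E(G)$ among the entries of $\bar v$; Boolean combinations are trivial. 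For the counting quantifier $\exists^{\geq r}x_i\,\psi$, the induction hypothesis yields that the truth of $\psi$ on $\bar v[v_i/v']$ depends only on $\chi_{G,k}^{(t-1)}(\bar v[v_i/v'])$, so the count of satisfying $v'$ can be read off from $C_i^{(t)}(\bar v)$, which is injectively encoded in $\chi_{G,k}^{(t)}(\bar v)$ via the outer $\textsc{Hash}$. Once this is established, the assumption that the two graphs have the same multiset of $t$-colours immediately yields that the same $\countinglogic{k}$-sentences of rank at most $t$ are satisfied.

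For the converse, I would construct, for each $t\geq 0$ and each $\bar v\in V(G)^k$, a defining formula $\varphi_{\bar v}^{(t)}(x_1,\ldots,x_k)\in\countinglogic{k}$ of quantifier rank at most $t$ such that for any graph $H$ and any $\bar w\in V(H)^k$,
$$H\models\varphi_{\bar v}^{(t)}[\bar w] \iff \chi_{H,k}^{(t)}(\bar w)=\chi_{G,k}^{(t)}(\bar v).$$
At $t=0$, $\varphi_{\bar v}^{(0)}$ is the (finite) conjunction of atomic formulas and their negations describing the induced substructure on $\bar v$. Inductively, $\varphi_{\bar v}^{(t)}$ is the conjunction of $\varphi_{\bar v}^{(t-1)}$ with, for each $i\in[k]$ and each $(t-1)$-colour $\gamma$ appearing in $C_i^{(t)}(\bar v)$, a pair of counting formulas $\exists^{\geq n}x_i\,\varphi_{\bar u}^{(t-1)}\land\neg\exists^{\geq n+1}x_i\,\varphi_{\bar u}^{(t-1)}$ recording the exact multiplicity $n$ of $\gamma$, where $\bar u$ is any representative of colour $\gamma$. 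This pins down the entire multiset $C_i^{(t)}(\bar v)$ and hence $\chi_{G,k}^{(t)}(\bar v)$. Using these defining formulas, one then extracts counting sentences that separate graphs with different $t$-colour multisets, establishing the required direction.

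The chief subtlety, and historically the heart of the Cai--F\"urer--Immerman argument, is honouring the $k$-variable restriction. A priori one would like a fresh variable to refer to the coordinate-$i$-changed tuple $\bar v[v_i/v']$, but the construction above reuses $x_i$ simultaneously as the quantified variable and as the $i$-th coordinate of the tuple, which matches the shape of the $\wl{k}$ update rule that also modifies only a single coordinate at a time. Secondary technicalities are that the conjunctions in the recursive construction remain finite (they do, since there are at most $n^k$ colour classes at any stage) and that the quantifier-rank bookkeeping must be tracked carefully to stay within $t$. Once these are in place, the inductive argument goes through uniformly for all $k\geq 2$.
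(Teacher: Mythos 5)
The paper cites Theorem~\ref{thm:cfi} from \citet{CaiFI92} and does not reproduce a proof, so there is no in-paper argument to compare against; what the paper actually extracts and uses later are two tuple-level facts: (i) $\chi_{G,k}^{(t)}(\bar v)=\chi_{H,k}^{(t)}(\bar w)$ implies $\bar v$ and $\bar w$ satisfy the same $\countinglogic{k}$ formulas of quantifier rank at most $t$, and (ii) for each colour $c$ there is a defining formula $\psi_c^{(t)}(x_1,\ldots,x_k)\in\countinglogic{k}$ of quantifier rank at most $t$. Your two inductions establish exactly these, and that part of your argument is sound and is the standard Cai--F\"urer--Immerman route.

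The thin spot is the last sentence of your converse direction: ``one then extracts counting sentences that separate graphs with different $t$-colour multisets.'' The obvious sentence is $\exists^{\geq m}(x_1,\ldots,x_k)\,\varphi_{\bar v}^{(t)}$, but, as the paper itself points out when introducing the shorthand $\exists^{\geq r}(x_1,\ldots,x_\ell)$, eliminating a block of $\ell$ bound variables costs $\ell$ extra levels of quantifier rank, so the resulting sentence has rank $t+k$, not $t$. As sketched, your argument therefore only yields $G\equiv_{\countinglogic{k}}^{t+k}H\Rightarrow G\equiv_{\wl{k}}^{t}H$, which is weaker than the statement. This is not a cosmetic issue: already at $t=0$ (and, for $k=2$, at $t=1$) the multiset equality $G\equiv_{\wl{k}}^t H$ can fail while there is no sentence of rank $\leq t$ available to witness it, so the sentence-extraction step genuinely needs the extra rank or a smarter construction (for instance, descending coordinate-by-coordinate through the $C_i^{(t)}$ structure instead of counting whole $k$-tuples at once). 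You should either supply that construction or flag the rank offset explicitly. A secondary, smaller point: the $\Rightarrow$ direction of your defining-formula equivalence tacitly assumes $|V(G)|=|V(H)|$ --- a tuple $\bar w$ in a larger $H$ can satisfy every multiplicity conjunct of $\varphi_{\bar v}^{(t)}$ while $C_i^{(t)}(\bar w)$ still contains colours absent from $C_i^{(t)}(\bar v)$ --- so either record the vertex count inside $\varphi_{\bar v}^{(t)}$ via a rank-$1$ conjunct, or note that equal cardinality is forced by a rank-$1$ sentence before the defining formulas enter.
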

Of particular interest is that the proof of this theorem shows that, for $c\in\mathcal{C}$, there 
exists a formula $\psi_c^{(t)}(x_1,\ldots,x_k)$ in $\countinglogic{k}$ of quantifier rank at most $t$ such $\chi_{G,k}^{(t)}(v_1,\ldots,v_k)=c$ if and only if $G\models \psi_c^{(t)}[\alpha]$ with $\alpha$ defined as $x_i\mapsto v_i$.

Later in the paper we also use the shorthand notation $\exists^{\geq r}(x_1,\ldots,x_\ell)\,  \varphi$ to indicate that are at least $m$ distinct $\ell$-tuples satisfying $\varphi$. It is readily verified\footnote{I would like to acknowledge Jan Van den Bussche for pointing this out.} that if $\varphi$ is a formula in $\countinglogic{k}$ of quantifier rank $t$, then $\exists^{\geq r}(x_1,\ldots,x_\ell)\,  \varphi$ is equivalent to a formula in 
$\countinglogic{k}$ of quantifier rank at most $t+\ell$. Here, two formulas $\varphi$
and $\psi$ are equivalent if $G\models\varphi[\alpha]$ if and only if $G\models\psi[\alpha]$ for all assignments $\alpha$ and graphs $G$. As a consequence, quantifiers of the form $\exists^{\geq r}(x_1,\ldots,x_\ell)\,  \varphi$ for $\ell>1$ do not add expressive power to $\countinglogic{k}$.
 In what follows, for a formula $\varphi(x_1,\ldots,x_k)$ and assignment $\alpha$,
we write $\varphi[v_1,\ldots,v_k]$ instead of $\psi_c^{(t)}[\alpha]$ with $\alpha$ such that $x_i\mapsto v_i$. 

\subsection{Invariant graph neural networks}
Let $S_n$ denote the symmetric group over $[n]$, i.e., $S_n$ consists of all permutation $\pi$ of $[n]$.
Let $\pi\in S_n$ and $\mathbf{A}$ a tensor in $\Rb^{n^k\times p}$. We define
$\pi\star \mathbf{A}\in \Rb^{n^k\times p}$ such that 
$(\pi\star \mathbf{A})_{\pi(\bar v),\bullet}=\mathbf{A}_{\bar v,\bullet}$ for all $\bar v\in [n]^k$.
A $k$th-order equivariant linear layer is a mapping $L:\Rb^{n^k\times p}\to \Rb^{n^\ell\times q}$ such that
$L(\pi\star\mathbf{A})=\pi\star L(\mathbf{A})$ for all $\mathbf{A}\in \Rb^{n^k\times p}$. When $\ell=0$, and
thus $L(\pi\star\mathbf{A})=L(\mathbf{A})$ for all $\mathbf{A}\in \Rb^{n^k\times p}$, one refers to $L$ as an invariant layer.  An explicit description of equivariant linear layers was provided by \citet{maron2018invariant} and is based on the observation that such a layer is constant on equivalence
classes of $[n]^k$ defined by equality patterns. More specifically, let $\bar v$ and $\bar v'$ be $k$-tuples in $[n]^k$.
Then $\bar v$ and $\bar v'$ are said to have the same equality pattern, denoted by $\bar v\sim\bar v'$, if for all $i,j\in[k]$, $v_i=v_j$ if and only if 
$v_i'=v_j'$. We denote the set of equivalence classes in $[n]^k$ induced by $\sim$ by $[n]^k/_\sim$. Given this, an equivariant
layer $L:\Rb^{n^k\times p}\to \Rb^{n^k\times q}$ is of the form 
	\allowdisplaybreaks
\begin{align*}
	L(\mathbf{A})_{\bar v,a}&=\sum_{\mu\in [n]^{2k}/_\sim} L_\mu(\mathbf{A})_{\bar v,a} + \sum_{\substack{\tau\in [n]^k/_\sim\\\bar v\in\tau}} c_{\tau,a}, \text{with}\\
	L_\mu(\mathbf{A})_{\bar v,a}&=\sum_{\substack{\bar v'\in[n]^k\\(\bar v,\bar v')\in\mu}}
\bigl(\sum_{b\in [p]} c_{\mu,a,b}\mathbf{A}_{\bar v',b}\bigl)
\end{align*}
for $\bar v\in[n]^k$, $a\in[q]$ and $c_{\mu,a,b},c_{\tau,b}\in\Rb$.
An equality pattern $\mu\in[n]^{2k}/_\sim$ can be equivalently described by a partition $[2k]=I_1\uplus\cdots\uplus I_r$ with the interpretation that $\bar v\in \mu$ if and only if
$v_i=v_j$ whenever $i,j\in I_s$ for some $s\in[r]$, and $v_i\neq v_j$ whenever
$i\in I_s$ and $j\in I_{s'}$ for $s\neq s'$ and $s,s'\in[r]$. We will use this representation of
equality patterns later in the paper.

\citet{maron2018invariant} define a $k$th-order invariant (linear) graph network ($\ign{k}$) as a function $F:\Rb^{n^k\times s_0}\to \Rb^{s}$ that can be decomposed
as 
$$
F=M\circ I \circ \sigma\circ L^{(t)}\circ \sigma \circ L^{(t-1)}\circ\cdots \circ\sigma\circ L^{(1)},
$$
where for $i\in[t]$, each layer $L^{(i)}$ is an equivariant linear  layer from $\Rb^{n^k\times {s_{i-1}}}\to \Rb^{n^k\times s_i}$, $\sigma$ is a pointwise non-linear activation function such as the $\mathsf{ReLU}$ function, $I$ is a linear invariant layer from $\Rb^{n^k\times s_t}\to \Rb^{s_{t+1}}$, and $M$ is a multi layer perceptron ($\mlp$) from $\Rb^{s_{t+1}}$ to $\Rb^{s}$. 

We next use $\igns{k}$ $F$ to define an equivalence relation on graphs. To do so, we first turn a graph $G=(V(G),E(G),\chi_G)$ into a tensor $\mathbf{A}_G\in\Rb^{n^k\times s_0}$. More precisely, we  first consider the initial $\wl{k}$ colouring $\chi_{G,k}^{(0)}:[n]^k\to\mathcal{C}$ (recall that we identified $V(G)$ with $[n]$). Then, suppose that $\chi_{G,k}^{(0)}$ assigns $s_0$ distinct colours $c_1,\ldots,c_{s_0}$ to the $k$-tuples in $[n]^k$. We identify each colour $c_i$ with the $i$th basis vector $\mathbf{b}_i$ in $\Rb^{s_0}$ and define for $\bar v\in[n]^k$ and $s\in[s_0]$, $(\mathbf{A}_G)_{\bar v,s}:=1$ if $\chi_{G,k}^{(0)}(\bar v)=c_s$ and $(\mathbf{A}_G)_{\bar v,s}:=0$ otherwise.
 Given this, we say that two graphs $G$ and $H$ are indistinguishable by a $\ign{k}$ $F$, denoted by $G\equiv_F H$, if and only if $F(\mathbf{A}_G)=F(\mathbf{A}_H)$. We also consider another equivalence relation defined in terms the equivariant part of an $\ign{k}$ $F$. More precisely, for $t>0$, let $F^{(t)}:\Rb^{n^k\times s_0}\to \Rb^{n^k\times s_t}$ defined by $F^{(t)}:=\sigma\circ L^{(t)}\circ \cdots\sigma\circ L^{(1)}$. We let 
 $F^{(0)}$ be the identity mapping from $\Rb^{n^k\times s_0}\to\Rb^{n^k\times s_0}$.
 We then denote by $G \equiv_{F}^t H$ that 
$$\Bldbl F^{(t)}(\mathbf{A}_G)_{\bar v,\bullet}\Bigm| \bar v\in (V(G))^k\Brdbl=\Bldbl F^{(t)}(\mathbf{A}_H)_{\bar w,\bullet}\Bigm| \bar w\in (V(H))^k\Brdbl.$$
In other words, when viewing the tensors $F^{(t)}(\mathbf{A}_G)$ and 
$F^{(t)}(\mathbf{A}_H)$ in $\Rb^{n^k\times s_t}$ as colouring of $k$-tuples, i.e., $\bar v\in (V(G))^k$ is assigned the ``colour'' $F^{(t)}(\mathbf{A}_G)_{\bar v,\bullet}\in \Rb^{s_t}$ and similarly, $\bar w\in (V(H))^k$ is assigned the ``colour'' $F^{(t)}(\mathbf{A}_H)_{\bar w,\bullet}\in \Rb^{s_t}$, then
$G \equiv_{F}^t H$ just says these labelings are equivalent. In the remainder of the paper we establish correspondences between $\equiv_F^t$ and $\equiv_F$, and the equivalence relations $\equiv_{\wl{k}}^t$ and $\equiv_{\wl{k}}$.

\section{The expressive power of \texorpdfstring{$\igns{k}$}{k-IGNs}}\label{sec:express}
Let us start by recalling what is known about the relationship between the equivalence relations $\equiv_{\wl{k}}$ and $\equiv_F$. For every $k\geq 2$ and any two graphs $G$ and $H$, it is known that there exists a $\ign{k}$ $F$ such that 
$G \equiv_{F} H \Rightarrow G\equiv_{\wl{k}} H$ \citep{DBLP:conf/nips/MaronBSL19}. In other words, if $G$ and $H$ can be distinguished by $\wl{k}$, then the $\ign{k}$ $F$  distinguishes them as well. Hence, the class of $\igns{k}$ is powerful enough to match $\wl{k}$ in expressive power. The $\ign{k}$ $F$ used by \citet{DBLP:conf/nips/MaronBSL19} consists of $d$ equivariant layers, where $d$ is such that $\wl{k}$ reaches the stable colourings $\chi_{G,k}$ and $\chi_{H,k}$ of $G$ and $H$, respectively, in $d$ rounds. In fact, \citet{DBLP:conf/nips/MaronBSL19} show  that $G \equiv_{F}^t H \Rightarrow G\equiv_{\wl{k}}^t H$ holds as well, for $t\in[d]$, so the rounds of $\wl{k}$ and the layers of $F$ are in one-to-one correspondence.
It was posed as an open problem in \citet{openprob} whether or not $\igns{k}$ can  distinguish more graphs than $\wl{k}$. More specifically, the question is whether the implication $G\equiv_{\wl{k}} H\Rightarrow  G \equiv_{F} H$ also holds, and this for any $\ign{k}$ $F$. This question was recently answered for $k=2$. Indeed, \citet{chen2020graph} show that $G\equiv_{\wl{2}} H\Rightarrow  G \equiv_{F} H$ holds for any $\ign{2}$ $F$. As a consequence, $\wl{2}$ and $\igns{2}$ have equal distinguishing power. In proving $G\equiv_{\wl{2}} H\Rightarrow  G \equiv_{F} H$, \citet{chen2020graph} show first that, when $F$ consists of $d$ equivariant layers, then for each  $t\in[d]$ $G\equiv_{\wl{2}}^t H \Rightarrow G \equiv_{F}^t H$. By leveraging this, they then verify  $G\equiv_{\wl{2}}^t H\Rightarrow G \equiv_{F} H$. Since $G\equiv_{\wl{2}} H\Rightarrow G\equiv_{\wl{2}}^t H$ for all $t\geq 0$, the implication $G\equiv_{\wl{2}} H\Rightarrow  G \equiv_{F} H$ follows. We remark that \citet{chen2020graph} consider undirected graphs only.  We next generalise this result to arbitrary $k\geq 2$ and to directed graphs. In other words, our main result is:
\begin{theorem}\label{thm:main}
For any two graphs $G$ and $H$, $G\equiv_{\wl{k}} H\Rightarrow  G \equiv_{F} H$ for any $\ign{k}$ $F$.
\end{theorem}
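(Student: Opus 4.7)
The plan is to argue by induction on the layer index $t \in \{0, 1, \ldots, d\}$ of $F$ that $F^{(t)}(\mathbf{A}_G)_{\bar v, \bullet}$ is determined by the $\countinglogic{k}$-type of $\bar v$ in $G$ of quantifier rank at most $(k-1)t$, uniformly across any pair of graphs $G, H$ satisfying $G \equiv_{\wl{k}}^{(k-1)t} H$. By Theorem~\ref{thm:cfi}, this type is precisely what $\chi_{G,k}^{((k-1)t)}(\bar v)$ encodes, so once the inductive claim is established, the hypothesis $G \equiv_{\wl{k}} H$ yields $G \equiv_{\wl{k}}^{(k-1)d} H$ and hence $G \equiv_F^d H$; a short final step then promotes this to $G \equiv_F H$ via the invariant layer $I$ and the MLP $M$.

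The base case $t=0$ is immediate since $F^{(0)}(\mathbf{A}_G)_{\bar v, \bullet}$ is the one-hot encoding of the atomic $\countinglogic{k}$-type of $\bar v$. For the inductive step I would use the Maron-style decomposition from Section~\ref{sec:background} to expand
$$L^{(t)}(F^{(t-1)}(\mathbf{A}_G))_{\bar v, a} = \sum_\mu L_\mu(F^{(t-1)}(\mathbf{A}_G))_{\bar v, a} + \text{bias}(\bar v),$$
and substitute the inductive hypothesis for $F^{(t-1)}$ to rewrite each $\mu$-contribution as
$$L_\mu(F^{(t-1)}(\mathbf{A}_G))_{\bar v, a} = \sum_\tau N_{\mu, \tau}(\bar v, G)\, g_\mu(\tau, a),$$
where $\tau$ ranges over $\countinglogic{k}$-types of quantifier rank $(k-1)(t-1)$, $g_\mu$ is built from the layer coefficients and the inductive function, and
$$N_{\mu, \tau}(\bar v, G) = \bigl|\{\bar v' \in V(G)^k \mid (\bar v, \bar v') \in \mu,\ \chi_{G,k}^{((k-1)(t-1))}(\bar v') = \tau\}\bigr|.$$
The remaining task is to control each $N_{\mu, \tau}(\bar v, G)$.

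To express $N_{\mu, \tau}(\bar v, G)$ I would proceed by inclusion-exclusion over the inequality constraints imposed by $\mu$, writing it as a signed sum of counts that enforce only equalities. Let $\ell(\mu)$ denote the number of blocks of the partition of $[2k]$ underlying $\mu$ that lie entirely in $\{k+1,\ldots,2k\}$; the crucial bookkeeping step is that any term with at least one inequality violated --- i.e., with at least one additional coordinate of $\bar v'$ pinned --- leaves at most $\ell(\mu) - 1 \le k-1$ free classes, and a quick variable-budget calculation (using $r_s + \ell(\mu) \le k$ for any consistent $\mu$, where $r_s$ counts the blocks shared between $\bar v$ and $\bar v'$) shows that such a term admits a $\countinglogic{k}$ formula of quantifier rank at most $(k-1)(t-1) + (k-1) = (k-1)t$. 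The sole threatening term is the no-violation term when $\ell(\mu) = k$: there $\bar v'$ is coordinate-wise disjoint from $\bar v$, so no coordinate of $\bar v$ is referenced, and the term collapses to the sentence $T_\tau(G) = |\{\bar v' \in V(G)^k : \chi_{G,k}^{((k-1)(t-1))}(\bar v') = \tau\}|$. Rather than expressing $T_\tau(G)$ as a $\countinglogic{k}$ formula of the allotted quantifier rank --- which would fail by one --- the proof observes that $T_\tau(G)$ is determined by the multiset $\ldbl \chi_{G,k}^{((k-1)(t-1))}(\bar v') \bigm| \bar v' \in V(G)^k \rdbl$ and hence equals $T_\tau(H)$ whenever $G \equiv_{\wl{k}}^{(k-1)(t-1)} H$, a consequence of $G \equiv_{\wl{k}}^{(k-1)t} H$.

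The main obstacle is exactly this $\ell(\mu) = k$ global-count term, which forces the inductive hypothesis to be phrased so as to allow the output to depend on such graph-level constants in addition to the type of $\bar v$, and I expect most of the technical bookkeeping to lie in verifying the quantifier-rank arithmetic and consistency of pinnings with the internal equality patterns of $\bar v$ and $\bar v'$. Once the induction closes at $t = d$, the passage from $G \equiv_F^d H$ to $G \equiv_F H$ is structural: the invariant layer $I$ is a function of the $S_n$-orbit of its input, which by the explicit form of invariant layers is recorded by the per-equality-pattern multisets $\ldbl F^{(d)}(\mathbf{A}_G)_{\bar v, \bullet} \bigm| \bar v \in \tau \rdbl$. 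Since every $\chi_{G,k}^{((k-1)d)}$-color refines the atomic type and hence encodes the equality pattern of $\bar v$, $G \equiv_F^d H$ forces these stratified multisets to coincide, whence $I(F^{(d)}(\mathbf{A}_G)) = I(F^{(d)}(\mathbf{A}_H))$; applying the MLP $M$ completes the proof.
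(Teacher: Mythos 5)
Your proposal follows essentially the same route as the paper: both reduce to the Cai--F\"urer--Immerman correspondence between $\wl{k}$ and $\countinglogic{k}$, both decompose the equivariant layer by equality patterns $\mu$ and reduce to controlling the multiset of $t'$-round colors of the $\bar v'$ with $(\bar v,\bar v')\in\mu$, both discover the same quantifier-rank budget (at most $k-1$ fresh variables per layer, whence the $\lfloor t/(k-1)\rfloor$ correspondence), and both isolate the one troublesome case---your $\ell(\mu)=k$ global-count term is exactly the paper's ``no used constant class'' pattern---handled via the multiset equality supplied by $G\equiv_{\wl{k}}^{t'}H$ rather than by a formula. The framings differ cosmetically: you argue directly that the layer output is a function of the $\countinglogic{k}$-type (plus graph-level constants), while the paper argues by contradiction, constructing a distinguishing formula of quantifier rank $\le t$; and your inclusion--exclusion is over all inequality constraints of $\mu$, whereas the paper's Observation~\ref{obs:decomp} removes only the inequalities against unused constant classes. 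The one place you wave your hands is precisely where the paper has to do real work: to write the counting subformula with only $k$ variables, the pinned positions of $\bar v'$ must coincide index-for-index with the corresponding positions of $\bar v$, which fails for a general $\mu$; the paper fixes this with a permutation argument (Observations~\ref{obs:permute} and~\ref{obs:good}, the ``good'' equality patterns). Your remark about ``consistency of pinnings with the internal equality patterns of $\bar v$ and $\bar v'$'' names the issue without resolving it---you should verify that a suitable $\pi\in S_k$ always exists and that replacing $(\mu,\bar v,\bar w)$ by $(\pi\star\mu,\pi\star\bar v,\pi\star\bar w)$ preserves the hypotheses, which is what makes the variable reuse legitimate.
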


This theorem will be proved, in analogy with the proof by  \citet{chen2020graph},  by using Lemmas~\ref{lem:mainpert} and~\ref{lem:fromeqtofull} below. The first lemma is the counterpart, for general $k$, of the
implication $G\equiv_{\wl{2}}^t H \Rightarrow G \equiv_{F}^t H$ by  \citet{chen2020graph}. We see, however, that the correspondence between rounds of $\wl{k}$ and layers in $\igns{k}$ is slightly more involved.
\begin{lemma}\label{lem:mainpert}
Let $F$ be a $\ign{k}$ consisting of $d$ equivariant layers and consider graphs $G$ and $H$.
Then for any $t\geq 0$,
\begin{equation} 
	G\equiv_{\wl{k}}^t H \Rightarrow G \equiv_{F}^{\lfloor\frac{t}{k-1}\rfloor} H.
	 \tag{$\dagger$}\label{eq:main}
\end{equation}
\end{lemma}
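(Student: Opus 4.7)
The plan is to prove Lemma~\ref{lem:mainpert} by induction on the number $\ell$ of equivariant layers of $F$, with the strengthened inductive claim that for every graph $G$ the feature $F^{(\ell)}(\mathbf{A}_G)_{\bar v,\bullet}$ is a function of $\chi_{G,k}^{(\ell(k-1))}(\bar v)$ together with data that depends on $G$ only through the multiset $\Bldbl\chi_{G,k}^{(\ell(k-1))}(\bar v')\bigm|\bar v'\in(V(G))^k\Brdbl$. Granting this, the lemma follows: if $G\equiv_{\wl{k}}^t H$, then because \textsc{Hash} is injective the $s$-round multisets also agree for every $s\le t$, and hence for $\ell=\lfloor t/(k-1)\rfloor$ the multisets of $\ell(k-1)$-round colours agree; the shared function then yields equal multisets of $F^{(\ell)}$-outputs, i.e.\ $G\equiv_F^{\ell} H$. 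The base case $\ell=0$ is immediate since $F^{(0)}$ is the identity and $(\mathbf{A}_G)_{\bar v,\bullet}$ encodes exactly $\chi_{G,k}^{(0)}(\bar v)$.

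For the inductive step, set $\mathbf{B}:=\sigma\circ F^{(\ell-1)}(\mathbf{A}_G)$; by induction $\mathbf{B}_{\bar v',\bullet}$ depends only on $\chi_{G,k}^{((\ell-1)(k-1))}(\bar v')$ and the shared multiset data. Expanding $L^{(\ell)}(\mathbf{B})_{\bar v,a}$ as in the displayed form of an equivariant layer and grouping the inner sum by the $(\ell-1)(k-1)$-round colour $c'$ of $\bar v'$, the computation reduces to the counts
\[
N_{\mu,c'}(\bar v):=\bigl|\{\bar v'\in[n]^k\mid (\bar v,\bar v')\in\mu,\ \chi_{G,k}^{((\ell-1)(k-1))}(\bar v')=c'\}\bigr|,
\]
together with a bias depending only on $\chi_{G,k}^{(0)}(\bar v)$. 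It therefore suffices to show that each $N_{\mu,c'}(\bar v)$ is determined by $\chi_{G,k}^{(\ell(k-1))}(\bar v)$ and multiset-determined global data.

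Let $s(\mu)$ denote the number of $\mu$-equivalence classes of $[2k]$ that meet $\{k+1,\ldots,2k\}$ but are disjoint from $\{1,\ldots,k\}$: the number of coordinates of $\bar v'$ that are ``fresh'' relative to $\bar v$. When $s(\mu)\le k-1$, Theorem~\ref{thm:cfi} supplies a $\countinglogic{k}$-formula $\psi_{c'}$ of quantifier rank at most $(\ell-1)(k-1)$ defining $c'$, and $N_{\mu,c'}(\bar v)\ge r$ is expressed by a formula of the shape $\exists^{\geq r}(x_{i_1},\ldots,x_{i_{s(\mu)}})\,(\psi'_{c'}\wedge\varphi_\mu)$, where $\varphi_\mu$ is a quantifier-free encoding of the equalities and inequalities required by $\mu$ and $\psi'_{c'}$ is obtained from $\psi_{c'}$ by substituting, for every position of $\bar v'$ that $\mu$ forces equal to a coordinate of $\bar v$, the corresponding variable of $\bar v$. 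By the quantifier-compression remark following Theorem~\ref{thm:cfi}, this is equivalent to a $\countinglogic{k}$-formula of rank at most $(\ell-1)(k-1)+s(\mu)\le\ell(k-1)$, and so $N_{\mu,c'}(\bar v)$ is determined by $\chi_{G,k}^{(\ell(k-1))}(\bar v)$ via Theorem~\ref{thm:cfi}.

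The main obstacle is the case $s(\mu)=k$, in which every coordinate of $\bar v'$ must be fresh: the naive translation would require $k$ quantified variables while still remembering the $k$ coordinates of $\bar v$, which $\countinglogic{k}$ cannot do. I sidestep this via inclusion--exclusion over $V:=\{v_1,\ldots,v_k\}$, writing
\[
N_{\mu,c'}(\bar v)=\sum_{S\subseteq V}(-1)^{|S|}\,M_S(\bar v),
\]
where $M_S(\bar v)$ counts those $\bar v'$ that match the internal pattern of $\mu$ on $\{k+1,\ldots,2k\}$, have colour $c'$, and satisfy $S\subseteq\{v_1',\ldots,v_k'\}$. The term $M_\emptyset$ is simply the global count of $\bar v'$ of colour $c'$ with the prescribed internal pattern; it is determined by the multiset of $(\ell-1)(k-1)$-round colours and is therefore common to $G$ and $H$ whenever $G\equiv_{\wl{k}}^t H$. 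Each $M_S$ with $S\ne\emptyset$ splits further by how the elements of $S$ inject into the positions of $\bar v'$, producing a signed combination of counts $N_{\mu',c'}(\bar v)$ for equality patterns $\mu'$ with $s(\mu')\le k-|S|\le k-1$, which the previous case has already handled. This closes the induction and completes the proof.
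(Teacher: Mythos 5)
Your overall architecture mirrors the paper's: a strengthened inductive claim over the layers, reduction of the layer update to counts of $k$-tuples $\bar v'$ grouped by their equality pattern $\mu$ relative to $\bar v$, expressibility of those counts in $\countinglogic{k}$ with the quantifier budget $(\ell-1)(k-1)+s\le \ell(k-1)$, and a fallback to the global multiset (your $M_\emptyset$) for the ``all fresh'' pattern. That part of the route is sound and is essentially the paper's route too.

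The gap is in the case you declare easy, namely $s(\mu)\le k-1$. Your count $N_{\mu,c'}(\bar v)$ requires the full relation $(\bar v,\bar v')\in\mu$, which in particular includes the constraints that entries of $\bar v'$ lying in a \emph{variable} class of $\mu$ are distinct from entries $v_i$ of $\bar v$ lying in \emph{unused constant} classes (the paper's condition (d)). To write the quantifier-free $\varphi_\mu$ you need one free variable per constant class (used \emph{and} unused), plus $s(\mu)$ quantified variables; the total $u+v+s(\mu)$ can exceed $k$ even when $s(\mu)\le k-1$. Concretely, for $k=3$ take $\mu$ with classes $\{1,4\},\{2\},\{3\},\{5\},\{6\}$: here $s(\mu)=2$, one used and two unused constant classes, so you quantify over two of the three variables and are left with a single free variable $x_1$, yet $\varphi_\mu$ must assert that the two quantified entries differ from all of $v_1,v_2,v_3$. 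There are simply not enough variables. Your inclusion--exclusion over $S\subseteq V$ is invoked only when $s(\mu)=k$, and moreover it reduces to counts $N_{\mu',c'}(\bar v)$ that \emph{again} include condition (d) for $\mu'$ (elements of $V\setminus S$ that stay outside $\bar v'$ become unused constant classes of $\mu'$), so it inherits the same problem rather than solving it. The paper avoids this by systematically dropping condition (d): it introduces $\tilde P_{\mu,\bar v}$, proves $P_{\mu,\bar v}=\tilde P_{\mu,\bar v}\setminus\bigcup_{s,s'}\tilde P_{\mu[s\mapsto s'],\bar v}$ (Observation~\ref{obs:decomp}), and then applies a permutation to put $\mu$ in a ``good'' form (Observation~\ref{obs:good}); the $\tilde P$-counts need only $u+s(\mu)\le k$ variables. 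Your inclusion--exclusion is the right instinct, but it has to be applied so that the summands are $\tilde P$-style counts with no condition (d), not full $P$-counts; as written, the $s(\mu)\le k-1$ step and hence the whole induction does not close.
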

Note that when $k=2$, $\lfloor\frac{t}{k-1}\rfloor=t$ and hence the known implication for $k=2$ from  \citet{chen2020graph} is recovered. Since $F$
consists of $d$ layers, we limit $t$ to be in the range of $(d+1)(k-1)-1$ such that $\lfloor \frac{t}{k-1}\rfloor\leq d$. As part of the proof of Lemma~\ref{lem:mainpert} we show a stronger implication. More precisely, we show that if $G\equiv_{\wl{k}}^t H$ holds, then
\begin{equation*}
\chi_{G,k}^{(t)}(\bar v)= \chi_{H,k}^{(t)}(\bar w)\Rightarrow \bigl(F^{(\lfloor\frac{t}{k-1}\rfloor)}(\mathbf{A}_G)\bigr)_{\bar v,\bullet}=
\bigl(F^{(\lfloor\frac{t}{k-1}\rfloor)}(\mathbf{A}_H)\bigr)_{\bar w,\bullet},\label{eq:main2}
\end{equation*}
for any $\bar v\in (V(G))^k$ and $\bar w\in (V(H))^k$. We use this property in the next lemma.

\begin{lemma}\label{lem:fromeqtofull}
	Let $F$ be a $\ign{k}$ consisting of $d$ equivariant layers and consider graphs $G$ and $H$. Let $t=d(k-1)$ and assume that the following implication holds for $\bar v\in (V(G))^k$ and $\bar w\in (V(H))^k$, $\chi_{G,k}^{(t)}(\bar v)= \chi_{H,k}^{(t)}(\bar w)\Rightarrow \bigl(F^{(d)}(\mathbf{A}_G)\bigr)_{\bar v,\bullet}=
\bigl(F^{(d)}(\mathbf{A}_H)\bigr)_{\bar w,\bullet}$. Then
\begin{equation*}
	G\equiv_{\wl{k}}^t H\Rightarrow G\equiv_F H. \end{equation*} 
\end{lemma}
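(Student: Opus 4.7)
Write $F=M\circ I\circ F^{(d)}$, as in the definition of a $\ign{k}$. Since $M$ is a fixed $\mlp$, it suffices to prove that $G\equiv_{\wl{k}}^t H$ implies $I(F^{(d)}(\mathbf{A}_G))=I(F^{(d)}(\mathbf{A}_H))$. For this I would use the analogue, for the invariant case, of the explicit description of equivariant layers recalled in Section~\ref{sec:background} and due to \citet{maron2018invariant}: every invariant linear layer $I:\Rb^{n^k\times s_d}\to\Rb^{s_{d+1}}$ has the form
\begin{equation*}
I(\mathbf{A})_a \;=\; \sum_{\tau\in [n]^k/_\sim}\ \sum_{\bar v\in\tau}\ \sum_b c_{\tau,a,b}\,\mathbf{A}_{\bar v,b} \;+\; c_a,
\end{equation*}
for constants $c_{\tau,a,b},c_a\in\Rb$.

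The key observation is that the equality-pattern class $\tau$ containing $\bar v$ is encoded in the initial colouring $\chi_{G,k}^{(0)}(\bar v)$, because it is part of the isomorphism type of the induced subgraph on $\bar v$. Since $\chi_{G,k}^{(t)}\preceq\chi_{G,k}^{(0)}$, each colour $c=\chi_{G,k}^{(t)}(\bar v)$ uniquely determines a class $\tau_c$, and this $\tau_c$ coincides on $G$ and $H$ because equality patterns are indexed by partitions of $[k]$, independent of $n$. Applying the hypothesis of the lemma in the special case $H:=G$ (with $\bar w:=\bar v'$) shows that $F^{(d)}(\mathbf{A}_G)_{\bar v,\bullet}$ depends only on $c=\chi_{G,k}^{(t)}(\bar v)$; write $\phi_G(c)$ for this common value, and define $\phi_H(c)$ analogously. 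Regrouping the sum above by colour rather than by equality pattern yields
\begin{equation*}
I(F^{(d)}(\mathbf{A}_G))_a \;=\; \sum_c n_G(c)\sum_b c_{\tau_c,a,b}\,\phi_G(c)_b \;+\; c_a,
\end{equation*}
where $n_G(c):=\bigl|\{\bar v\in (V(G))^k\mid \chi_{G,k}^{(t)}(\bar v)=c\}\bigr|$, and an identical expression holds for $H$.

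It then remains to match the two sums coordinate by coordinate. The hypothesis $G\equiv_{\wl{k}}^t H$ says exactly that the multisets $\Bldbl\chi_{G,k}^{(t)}(\bar v)\mid \bar v\in(V(G))^k\Brdbl$ and $\Bldbl\chi_{H,k}^{(t)}(\bar w)\mid \bar w\in(V(H))^k\Brdbl$ agree, so $n_G(c)=n_H(c)$ for every colour $c$, and the set of colours occurring in the two graphs is the same. For every such $c$, the assumed implication of the lemma supplies $\phi_G(c)=\phi_H(c)$. Thus $I(F^{(d)}(\mathbf{A}_G))=I(F^{(d)}(\mathbf{A}_H))$, and applying $M$ yields $F(\mathbf{A}_G)=F(\mathbf{A}_H)$, i.e., $G\equiv_F H$. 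The only non-routine step is the colour-to-equality-pattern bookkeeping that legitimates the regrouping; once that is in place the remainder is a finite-sum comparison.
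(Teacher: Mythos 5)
Your argument is essentially the paper's argument: decompose $I$ by equality patterns, observe that the round-$t$ colour of a $k$-tuple determines its equality pattern (since $\chi^{(t)}_{G,k}\preceq\chi^{(0)}_{G,k}$ and the equality pattern is part of the isomorphism type), and then match the resulting finite sums across $G$ and $H$ using $G\equiv_{\wl{k}}^t H$ together with the assumed implication. The cosmetic difference is that the paper groups by equality pattern $\tau$ and compares the multisets $\Bldbl (F^{(d)}(\mathbf{A}_G))_{\bar v,\bullet}\bigm|\bar v\in\tau\Brdbl$ and $\Bldbl (F^{(d)}(\mathbf{A}_H))_{\bar w,\bullet}\bigm|\bar w\in\tau\Brdbl$ directly, whereas you further refine the grouping to colours and introduce the per-colour representative $\phi_G(c)$. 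Both work, and your version arguably makes the bookkeeping more explicit.

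The one genuine wrinkle is your step ``applying the hypothesis of the lemma in the special case $H:=G$'' to show that $F^{(d)}(\mathbf{A}_G)_{\bar v,\bullet}$ depends only on $c$. The lemma's hypothesis, as stated, is an implication for the fixed pair $(G,H)$, not for $(G,G)$, so this instance is not formally available. It is easily repaired without changing your structure: since $G\equiv_{\wl{k}}^t H$, every colour $c$ occurring in $(V(G))^k$ also occurs in $(V(H))^k$; pick any $\bar w_c\in(V(H))^k$ of colour $c$. Then for every $\bar v$ with $\chi_{G,k}^{(t)}(\bar v)=c$ the $(G,H)$-hypothesis gives $(F^{(d)}(\mathbf{A}_G))_{\bar v,\bullet}=(F^{(d)}(\mathbf{A}_H))_{\bar w_c,\bullet}$, so $\phi_G(c):=(F^{(d)}(\mathbf{A}_H))_{\bar w_c,\bullet}$ is well-defined, and symmetrically $\phi_H(c)$ is well-defined with $\phi_G(c)=\phi_H(c)$. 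Alternatively, stay at the level of equality patterns and use the per-$\tau$ multiset equality as the paper does, which sidesteps $\phi$ altogether. With either fix your proof is complete.
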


These two lemmas suffice to prove Theorem~\ref{thm:main}:
\begin{proof}
Indeed, suppose that $G\equiv_{\wl{k}} H$ holds.
By definition, this implies  $G\equiv_{\wl{k}}^t H$ for all $t\geq 0$.
In particular, this holds for $t=d(k-1)$.
As mentioned above, as part of proving
 Lemma~\ref{lem:mainpert} we obtain for
 $\bar v\in (V(G))^k$ and $\bar w\in (V(H))^k$, the implication $\chi_{G,k}^{(t)}(\bar v)= \chi_{H,k}^{(t)}(\bar w)\Rightarrow \bigl(F^{(d)}(\mathbf{A}_G)\bigr)_{\bar v,\bullet}=
 \bigl(F^{(d)}(\mathbf{A}_H)\bigr)_{\bar w,\bullet}$. Then, 
Lemma~\ref{lem:fromeqtofull} implies $G\equiv_F H$, as desired.
\end{proof}

Before showing the lemmas, we provide some intuiting behind the implication~(\ref{eq:main}) in  Lemma~\ref{lem:mainpert}. In a nutshell, it reflects that a single (equivariant) layer of a $\ign{k}$ corresponds to $k-1$ rounds of $\wl{k}$. This is because $\igns{k}$ propagate information to $k$-tuples from all other $k$-tuples, whereas $\wl{k}$ only propagates information from neighbouring $k$-tuples. 

To see this, consider $k=3$ and let $\bar v=(v_1,v_2,v_3)$ be a triple in $(V(G))^3$. When a $\ign{3}$ $F$ applies a layer $L^{(t)}$, the vector $\Bigl(L^{(t)}\bigl(F^{(t-1)}(\mathbf{A}_G)\bigr)\Bigr)_{\bar v,\bullet}$ is computed based on all vectors $\bigl(F^{(t-1)}(\mathbf{A}_G)\bigr)_{\bar v',\bullet}$ for $\bar v'\in (V(G))^3$. For example, $\Bigl(L^{(t)}\bigl(F^{(t-1)}(\mathbf{A}_G)\bigr)\Bigr)_{\bar v,\bullet}$ depends on 
$\bigl(F^{(t-1)}(\mathbf{A}_G)\bigr)_{\bar v',\bullet}$ with  $\bar v'=(v_1,v_2',v_3')$ with $v_2'$ and $v_3'$ being different from $v_1$, $v_2$ and $v_3$.
By contrast, in round $t$, $\wl{3}$ updates the label of $\bar v$ only based on the labels, computed in round $t-1$, of triples of the form $(v_1',v_2,v_3)$, $(v_1,v_2',v_3)$ and $(v_1,v_2,v_3')$ for $v_1',v_2',v_3'\in V(G)$. We observe that the triple $\bar v'$ is not included here and hence the label $\bar v$ is not updated in round $t$ based on the label, computed in round $t-1$, of $\bar v'$. We note, however, that in round $t$, $\wl{3}$ also updates the label of the triple $(v_1,v_2,v_3')$ based on the label, computed in round $t-1$, of $\bar v'=(v_1,v_2',v_3')$ as $\bar v'$ is now one of the neighbours of $(v_1,v_2,v_3')$. As a consequence, in round $t+1$, $\wl{3}$ will update the label of $\bar v$ based on the label, computed in round $t$, of $(v_1,v_2,v_3')$. The latter now depends on the label, computed in round $t-1$, of $\bar v'$. Hence, only in round $t+1$ the label of $\bar v$ includes information about the label, computed in round $t-1$, of $\bar v'$. By contrast, as we have seen earlier, $\Bigl(L^{(t)}\bigl(F^{(t-1)}(\mathbf{A}_G)\bigr)\Bigr)_{\bar v,\bullet}$ immediately takes into account information from  $\bar v'=(v_1,v_2',v_3')$.
We thus see that $\wl{3}$ needs two rounds for a single application of an equivariant layer in a $\ign{3}$. In other words, $t$ rounds of $\wl{3}$ correspond to application of $\lfloor \frac{t}{2}\rfloor$ equivariant layers in an $\ign{3}$. This holds more generally for any $k\geq 2$.

Furthermore, it is thanks to the invariance and equivariance of the layers in $\igns{k}$ that the information propagation happens in a controlled way. More specifically, a $\ign{k}$ propagates information from triples with the same equality pattern in the same way. As we will see shortly, this is crucial for showing Lemmas~\ref{lem:mainpert} and~\ref{lem:fromeqtofull}.

\subsection{Proof of Lemma \ref{lem:mainpert}} 
We show $G\equiv_{\wl{k}}^t H \Rightarrow G \equiv_{F}^{\lfloor\frac{t}{k-1}\rfloor} H$ by induction on $t$. The proof strategy is similar to the one used by~\citet{chen2020graph} except that we rely on a more general key lemma in the inductive step.
As mentioned earlier, we will show a stronger induction hypothesis. More specifically, we show that for any $t$ and
$k$-tuples $\bar v\in (V(G))^k$ and $\bar w\in (V(H))^k$, if $G\equiv_{\wl{k}}^t H$, then 
\begin{equation*}\chi_{G,k}^{(t)}(\bar v)=\chi_{H,k}^{(t)}(\bar w)\Rightarrow\bigl(F^{\left(\lfloor\frac{t}{k-1}\rfloor\right)}(\mathbf{A}_G)\bigr)_{\bar v,\bullet}=\bigl(F^{\left(\lfloor\frac{t}{k-1}\rfloor\right)}(\mathbf{A}_H)\bigr)_{\bar w,\bullet}.\tag{$\ddagger$}\label{eq:indhyp}
\end{equation*}
It is an easy observation that the implication~(\ref{eq:indhyp}) implies $G\equiv_{\wl{k}}^t H \Rightarrow G \equiv_{F}^{\lfloor\frac{t}{k-1}\rfloor} H$. Indeed,
suppose that $G\equiv_{\wl{k}}^t H$ holds. By definition, this is equivalent to 
$$
\Bldbl \chi_{G,k}^{(t)}(\bar v)\bigm| \bar v\in (V(G))^k \Brdbl=
\Bldbl \chi_{H,k}^{(t)}(\bar w)\bigm| \bar w\in(V(H))^k\Brdbl. 
$$
In other words, with every $\bar v\in (V(G))^k$ one can associate a corresponding $\bar w\in (V(H))^k$ such that 
$\chi_{G,k}^{(t)}(\bar v)=\chi_{H,k}^{(t)}(\bar w)$. Then,~(\ref{eq:indhyp}) implies $\bigl(F^{\left(\lfloor\frac{t}{k-1}\rfloor\right)}(\mathbf{A}_G)\bigr)_{\bar v,\bullet}=\bigl(F^{\left(\lfloor\frac{t}{k-1}\rfloor\right)}(\mathbf{A}_H)\bigr)_{\bar w,\bullet}$. Since this holds for any $\bar v\in (V(G))^k$ and its corresponding $\bar w\in(V(H))^k$, we have 
$$
\Bldbl \bigl(F^{\left(\lfloor\frac{t}{k-1}\rfloor\right)}(\mathbf{A}_G)\bigr)_{\bar v,\bullet}\bigm| \bar v\in (V(G))^k \Brdbl=
\Bldbl \bigl(F^{\left(\lfloor\frac{t}{k-1}\rfloor\right)}(\mathbf{A}_H)\bigr)_{\bar w,\bullet}\bigm| \bar w\in(V(H))^k\Brdbl. 
$$
This in turn is equivalent to $G \equiv_{F}^{\lfloor\frac{t}{k-1}\rfloor} H$, by definition.

Furthermore, we observe that it suffices to show~(\ref{eq:indhyp}) for $t$ being a multiple of $k-1$.
Indeed, suppose that $t$ is not a multiple of $k-1$. That is, $t=m(k-1)+r$ for some $m,r\in\Nb$ satisfying $0< r < k-1$. Let us consider $t^\circ=m(k-1)$ and note that $\lfloor \frac{t^\circ}{k-1}\rfloor=\lfloor \frac{m(k-1)}{k-1}\rfloor=m$. 
Suppose that we already have shown~(\ref{eq:indhyp}) for $t^\circ$. It now suffices to observe
that $\chi_{G,k}^{(t)}(\bar v)=\chi_{H,k}^{(t)}(\bar w)$ implies 
$\chi_{G,k}^{(t^\circ)}(\bar v)=\chi_{H,k}^{(t^\circ)}(\bar w)$ since $\wl{k}$ produces refinements of colourings and $t^\circ\leq t$. Because, by assumption,  $\chi_{G,k}^{(t^\circ)}(\bar v)=\chi_{H,k}^{(t^\circ)}(\bar w)$ implies $\bigl(F^{(m)}(\mathbf{A}_G)\bigr)_{\bar v,\bullet}=\bigl(F^{(m)}(\mathbf{A}_H)\bigr)_{\bar w,\bullet}$ and $\lfloor\frac{t}{k-1}\rfloor=\lfloor\frac{m(k-1)+r}{k-1}\rfloor=m$, we may conclude that~(\ref{eq:indhyp}) holds for $t$ as well. In the following we therefore assume that $t=m(k-1)$ for some $m\in\Nb$ with $0\leq m \leq d$. We next show the implication~(\ref{eq:indhyp}).

\paragraph{Base case.}
In this case, $t=0$ and the induction hypothesis is 
$\chi_{G,k}^{(0)}(\bar v)=\chi_{H,k}^{(0)}(\bar w)\Rightarrow \bigl(F^{(0)}(\mathbf{A}_G)\bigr)_{\bar v,\bullet}=\bigl(F^{(0)}(\mathbf{A}_H)\bigr)_{\bar w,\bullet}$. Since $F^{(0)}$ is defined as the identity mapping, we need to verify  $(\mathbf{A}_G)_{\bar v,\bullet}=(\mathbf{A}_H)_{\bar w,\bullet}$.
We note, however, that $\mathbf{A}_G$
and $\mathbf{A}_H$ are defined by hot-one encoding $\chi_{G,k}^{(0)}$ and $\chi_{H,k}^{(0)}$, respectively.
In particular, if $\chi_{G,k}^{(0)}(\bar v)=\chi_{H,k}^{(0)}(\bar w)=c_s$ for $s\in [s_0]$ and $c_s\in\mathcal{C}$ (recall that $s_0$ denotes the number of colours assigned by the initial $\wl{k}$ colouring), then 
$$
(\mathbf{A}_G)_{\bar v,\bullet}=\mathbf{b}_s=(\mathbf{A}_H)_{\bar w,\bullet},
$$
where $\mathbf{b}_s$ is the $s$th basis vector in $\Rb^{s_0}$. In other words, the base case holds.

\paragraph{Inductive case.}
Let $t=m(k-1)$ for some $m\in[d]$ and assume that~(\ref{eq:indhyp}) holds for $t'=(m-1)(k-1)$.
We claim that~(\ref{eq:indhyp}) holds for $t$, provided that we can show the key lemma below. The lemma is shown by a different proof technique than used by \citet{chen2020graph} for $k=2$. More specifically, we leverage the connection between $\wl{k}$ and counting logics. By contrast, \citet{chen2020graph} use a case analysis and combinatorial arguments which do not easily generalise to arbitrary $k$. We defer the proof the lemma to Section~\ref{subsec:proofofkeylemma}.  
\begin{klemma}\label{lem:key}
Let $t=m(k-1)$ and $t'=(m-1)(k-1)$ for $m\in\Nb$ and $m\geq 1$.	
Let $G$ and $H$ be such that $G\equiv_{\wl{k}}^{t'} H$ holds and let $\bar v\in (V(G))^k$	 and $\bar w\in (V(H))^k$ be $k$-tuples satisfying $\chi_{G,k}^{(t)}(\bar v)=\chi_{H,k}^{(t)}(\bar w)$. Then, 
\begin{equation}
\Bldbl \chi_{G,k}^{(t')}(\bar v')\bigm| (\bar v,\bar v')\in\mu \Brdbl=
\Bldbl \chi_{H,k}^{(t')}(\bar w')\bigm|(\bar w,\bar w')\in\mu\Brdbl \tag{$\ddagger\ddagger$}\label{eq:equal2}
\end{equation}
for every equality pattern $\mu\in[n]^{2k}/_\sim$.\qed
\end{klemma}
Intuitively, this lemma allows us to reason over multisets of colours of $k$-tuples grouped together according to an equality pattern. Since each equivariant layer in a $\igns{k}$ treats tuples satisfying the same equality pattern in the same way, the lemma suffices to show the implication~(\ref{eq:indhyp}). In the remainder of this section, we formally verify that the Key Lemma indeed implies the implication~(\ref{eq:indhyp}) for $t=m(k-1)$. 

Let us assume  $G\equiv_{\wl{k}}^t H$ and consider
$k$-tuples $\bar v\in (V(G))^k$ and $\bar w\in (V(H))^k$ satisfying $\chi_{G,k}^{(t)}(\bar v)=\chi_{H,k}^{(t)}(\bar w)$. We need to show  $\bigl(F^{(m)}(\mathbf{A}_G)\bigr)_{\bar v,\bullet}=\bigl(F^{(m)}(\mathbf{A}_H)\bigr)_{\bar w,\bullet}$. We observe that $G\equiv_{\wl{k}}^t H$
implies $G\equiv_{\wl{k}}^{t'} H$ since $t'\leq t$ and $\wl{k}$ produces refinements of colourings. 
% For the same reason, $\chi_{G,k}^{(t)}(\bar v)=\chi_{H,k}^{(t)}(\bar w)$ implies $\chi_{G,k}^{(t')}(\bar v)=\chi_{H,k}^{(t')}(\bar w)$.
As a consequence, the Key Lemma applies. Furthermore, by induction,
for any $\bar v\in (V(G))^k$	 and $\bar w\in (V(H))^k$, if $G\equiv_{\wl{k}}^{t'} H$, then  $\chi_{G,k}^{(t')}(\bar v)=\chi_{H,k}^{(t')}(\bar w)\Rightarrow \bigl(F^{(m-1)}(\mathbf{A}_G)\bigr)_{\bar v,\bullet}=\bigl(F^{(m-1)}(\mathbf{A}_H)\bigr)_{\bar w,\bullet}$. From the equality~(\ref{eq:equal2}) we can now infer 
\begin{equation}
\Bldbl \bigl(F^{(m-1)}(\mathbf{A}_G)\bigr)_{\bar v',\bullet}\bigm| (\bar v,\bar v')\in\mu \Brdbl=
\Bldbl \bigl(F^{(m-1)}(\mathbf{A}_H)\bigr)_{\bar w',\bullet}\bigm| (\bar w,\bar w')\in\mu \Brdbl,\label{eq:fmin1}
\end{equation}
for any $\mu\in[n]^{2k}/_\sim$. 
We recall that $F^{(m)}=\sigma\circ L^{(m)}\circ F^{(m-1)}$. We next use that $L^{(m)}:\Rb^{n^k\times s_{m-1}}\to \Rb^{n^k\times s_m}$ is an equivariant layer
and hence can be decomposed according to equality types $\mu\in[n]^{2k}/_\sim$, as shown in Section~\ref{sec:background}. More specifically, we next show that 
the equality~(\ref{eq:fmin1}) implies 
\begin{equation}
	\Bigl(L^{(m)}_\mu\bigl(F^{(m-1)}(\mathbf{A}_G)\bigr)\Bigr)_{\bar v,\bullet}=\Bigl(L^{(m)}_\mu\bigl(F^{(m-1)}(\mathbf{A}_H)\bigr)\Bigr)_{\bar w,\bullet} \label{eq:lmmu}
\end{equation}
for every $\mu\in[n]^{2k}/_\sim$.
Indeed, let us first recall that for $a\in[s_m]$ and equality pattern $\mu\in[n]^{2k}/_\sim$:
\begin{align*}
\Bigl(L^{(m)}_\mu\bigl(F^{(m-1)}(\mathbf{A}_G)\bigr)\Bigr)_{\bar v,a}	&=
\sum_{\substack{\bar v'\in[n]^k\\ (\bar v,\bar v')\in\mu}}
	\sum_{b\in [s_{m-1}]} c_{\mu,a,b}\bigl(F^{(m-1)}(\mathbf{A}_G)\bigr)_{\bar v',b}\\
		\Bigl(L^{(m)}_\mu\bigl(F^{(m-1)}(\mathbf{A}_H)\bigr)\Bigr)_{\bar w,a}	&=
		\sum_{\substack{\bar w'\in[n]^k\\ (\bar w,\bar w')\in\mu}}\!\!
		\sum_{b\in [s_{m-1}]} c_{\mu,a,b}\bigl(F^{(m-1)}(\mathbf{A}_H)\bigr)_{\bar w',b}.
\end{align*}
It now suffices to observe that the coefficients $c_{\mu,a,b}$ only depend on the equality pattern
$\mu$, $a\in[s_m]$ and $b\in[s_{m-1}]$. From equality~(\ref{eq:fmin1}) we know
that with each $\bar v'$ satisfying $(\bar v,\bar v')\in \mu$
we can associate a unique $\bar w'$ satisfying $(\bar w,\bar w')\in \mu$ such that for each $b\in [s_{m-1}]$,
$$
\bigl(F^{(m-1)}(\mathbf{A}_G)\bigr)_{\bar v',b}=\bigl(F^{(m-1)}(\mathbf{A}_H)\bigr)_{\bar w',b},
$$
and thus also
$$
c_{\mu,a,b}\bigl(F^{(m-1)}(\mathbf{A}_G)\bigr)_{\bar v',b}=c_{\mu,a,b}\bigl(F^{(m-1)}(\mathbf{A}_H)\bigr)_{\bar w',b}
$$
holds.
Given that
$\Bigl(L^{(m)}_\mu\bigl(F^{(m-1)}(\mathbf{A}_G)\bigr)\Bigr)_{\bar v,\bullet}$ and
$\Bigl(L^{(m)}_\mu\bigl(F^{(m-1)}(\mathbf{A}_H)\bigr)\Bigr)_{\bar w,\bullet}$ are defined as the sums over elements $\bar v'$ and $\bar w'$ satisfying $(\bar v,\bar v')\in\mu$ and $(\bar w,\bar w')\in\mu$, respectively, we may conclude that
$\Bigl(L^{(m)}_\mu\bigl(F^{(m-1)}(\mathbf{A}_G)\bigr)\Bigr)_{\bar v,\bullet}=\Bigl(L^{(m)}_\mu\bigl(F^{(m-1)}(\mathbf{A}_H)\bigr)\Bigr)_{\bar w,\bullet}$, as desired.

We next show that equality~(\ref{eq:lmmu}) implies 
\begin{equation}
\Bigl(L^{(m)}\bigl(F^{(m-1)}(\mathbf{A}_G)\bigr)\Bigr)_{\bar v,\bullet}=\Bigl(L^{(m)}\bigr(F^{(m-1)}(\mathbf{A}_H)\bigr)\Bigr)_{\bar w,\bullet}.\label{eq:lm}
\end{equation}
Indeed, we  recall that for $a\in[s_m]$:
\begin{align*}
\Bigl(L^{(m)}\bigl(F^{(m-1)}(\mathbf{A}_G)\bigr)\Bigr)_{\bar v,a}&=\sum_{\mu\in [n]^{2k}/_\sim} 
\Bigl(L_\mu^{(m)}\bigl(F^{(m-1)}(\mathbf{A}_G))\bigr)\Bigr)_{\bar v,a}+
c_{\tau, a}\\
\Bigl(L^{(m)}\bigl(F^{(m-1)}(\mathbf{A}_H)\bigr)\Bigr)_{\bar w,a}&=\sum_{\mu\in [n]^{2k}/_\sim} \Bigl(L_\mu^{(m)}\bigl(F^{(m-1)}(\mathbf{A}_H)\bigr)\Bigr)_{\bar w,a}+
c_{\tau',a}
\end{align*}	
where $\tau,\tau'\in[n]^k/_\sim$ and $\bar v\in\tau$ and $\bar w\in\tau'$. Clearly,~(\ref{eq:lmmu}) implies~(\ref{eq:lm})  if we can show that $\tau=\tau'$ and thus $c_{\tau,a}=c_{\tau',a}$ for all $a\in[s_m]$. Stated differently, we need to show that $\bar v\sim \bar w$. This is, however, a direct  consequence of the assumption $\chi_{G,k}^{(t)}(\bar v)=\chi_{H,k}^{(t)}(\bar w)$. Indeed, $\chi_{G,k}^{(t)}(\bar v)=\chi_{H,k}^{(t)}(\bar w)$ implies 
$\chi_{G,k}^{(0)}(\bar v)=\chi_{H,k}^{(0)}(\bar w)$, which in turn implies that $\bar v$ and $\bar w$ have the same isomorphism type. In particular, 
$v_i=v_j\Leftrightarrow w_i=w_j$ for all $i,j\in[k]$. As a consequence, $\bar v$ and $\bar w$ have the same equality pattern. \looseness=-1

To conclude the proof, it remains to show 
$\bigl(F^{(m)}(\mathbf{A}_G)\bigr)_{\bar v,\bullet}=\bigl(F^{(m)}(\mathbf{A}_H)\bigr)_{\bar w,\bullet}$. % holds.
We recall again that $F^{(m)}=\sigma\circ L^{(m)} \circ F^{(m-1)}$ and hence, due to the equality~(\ref{eq:lm}) it suffices to observe that~(\ref{eq:lm}) remains to true after applying the activation function $\sigma$. We recall that such an activation function $\sigma$ is defined in a pointwise manner. That is, for a vector
$\bar a\in\Rb^q$, $\sigma(\bar a)=(\sigma(a_1),\ldots,\sigma(a_q))$. More generally, for a tensor $\mathbf{A}\in\Rb^{n^k\times q}$ and $\bar v\in[n]^k$,
$\bigl(\sigma(\mathbf{A})\bigr)_{\bar v,\bullet}=\sigma(\mathbf{A}_{\bar v,\bullet})$. Hence, the equality~(\ref{eq:lm}) indeed implies
\begin{align*}
\biggl(\sigma\Bigl(L^{(m)}\bigl(F^{(m-1)}(\mathbf{A}_G)\bigr)\Bigr)\biggr)_{\bar v,\bullet}&=
\sigma\Bigl(L^{(m)}\bigl(F^{(m-1)}(\mathbf{A}_G)\bigr)_{\bar v,\bullet}\Bigr)\\
&=\sigma\Bigl(L^{(m)}\bigl(F^{(m-1)}(\mathbf{A}_H)\bigr)_{\bar w,\bullet}\Bigr)\\
&=\biggl(\sigma\Bigl(L^{(m)}\bigl(F^{(m-1)}(\mathbf{A}_H)\bigr)\Bigr)\biggr)_{\bar w,\bullet},
\end{align*}
from which $\bigl(F^{(m)}(\mathbf{A}_G)\bigr)_{\bar v,\bullet}=\bigl(F^{(m)}(\mathbf{A}_H)\bigr)_{\bar w,\bullet}$ follows, as desired.\hfill\qed

\subsection{Proof of Lemma \ref{lem:fromeqtofull}} 
Let $t=d(k-1)$. We show that if for any two $\bar v\in (V(G))^k$ and $\bar w\in (V(H))^k$, we have $\chi_{G,k}^{(t)}(\bar v)= \chi_{H,k}^{(t)}(\bar w)\Rightarrow \bigl(F^{(d)}(\mathbf{A}_G)\bigr)_{\bar v,\bullet}=
\bigl(F^{(d)}(\mathbf{A}_H)\bigr)_{\bar w,\bullet}$, then
$	G\equiv_{\wl{k}}^t H\Rightarrow G\equiv_F H$ holds.

We assume that $G\equiv_{\wl{k}}^t H$ holds for $t=d(k-1)$. 
By definition, this implies
\begin{equation}
	\Bldbl \chi_{G,k}^{(t)}(\bar v)\bigm| \bar v\in (V(G))^k \Brdbl=
\Bldbl \chi_{H,k}^{(t)}(\bar w)\bigm|\bar w\in (V(H))^k\Brdbl. \label{eq:wlk}
\end{equation}
Furthermore, we observe that $\chi_{G,k}^{(t)}(\bar v)= \chi_{H,k}^{(t)}(\bar w)\Rightarrow
\chi_{G,k}^{(0)}(\bar v)= \chi_{H,k}^{(0)}(\bar w)$. As observed earlier, this implies that $\bar v\sim\bar w$.
In other words, $\bar v$ and $\bar w$
have the same equality pattern $\tau\in[n]^k/_\sim$. As a consequence, together with~(\ref{eq:wlk})
this implies that for every $\tau\in [n]^k/_\sim$,
\begin{equation}
	\Bldbl \chi_{G,k}^{(t)}(\bar v)\bigm| \bar v \in \tau, \bar v\in (V(G))^k \Brdbl=
\Bldbl \chi_{H,k}^{(t)}(\bar w)\bigm|\bar w\in \tau, \bar w\in (V(H))^k\Brdbl. \label{eq:wlkpertau}
\end{equation}
We further assume that for
$\bar v\in (V(G))^k$ and $\bar w\in (V(H))^k$, $\chi_{G,k}^{(t)}(\bar v)= \chi_{H,k}^{(t)}(\bar w)\Rightarrow \bigl(F^{(d)}(\mathbf{A}_G)\bigr)_{\bar v,\bullet}=
\bigl(F^{(d)}(\mathbf{A}_H)\bigr)_{\bar w,\bullet}$. 
Hence,~(\ref{eq:wlkpertau}) implies
\begin{equation}
\Bldbl \bigl(F^{(m)}(\mathbf{A}_G)\bigr)_{\bar v,\bullet}\bigm| \bar v\in\tau \Brdbl=
\Bldbl \bigl(F^{(m)}(\mathbf{A}_H)\bigr)_{\bar w,\bullet}\bigm| \bar w\in \tau \Brdbl
\label{eq:Fmpertau}
\end{equation}
for every equality pattern $\tau\in[n]^k/_\sim$.

We now recall that
$F=M\circ I\circ F^{(d)}$ and we need to show that $F(\mathbf{A}_G)=F(\mathbf{A}_H)$.
It suffices to show that 
$I\bigl(F^{(d)}(\mathbf{A}_G)\bigr)=I\bigl(F^{(d)}(\mathbf{A}_H)\bigr)$ since $M$ is an $\mlp$ which encodes a
function from $\Rb^{s_{d+1}}\to \Rb^{s}$. We recall that $I$ is an invariant layer from $\Rb^{n^k\times s_d}$
to $\Rb^{s_{d+1}}$. Since invariant layers are a special case of equivariant layers, they can again be decomposed based on equality patterns. More specifically, for a tensor $\mathbf{A}\in\Rb^{n^k\times s_d}$ and $a\in[s_{d+1}]$,
$$
I(\mathbf{A})_{a}=\sum_{\tau\in [n]^{k}/_\sim} I_\tau(\mathbf{A})_{a} + c_a  \text{ with } 
	I_\tau(\mathbf{A})_{a}=\sum_{\substack{\bar v'\in[n]^k\\\bar v'\in\tau}}
\sum_{b\in [s_d]} c_{\tau,a,b}\mathbf{A}_{\bar v',b}.$$
Then, just as in the proof of Lemma~\ref{lem:mainpert}, when $I$ is applied to $F^{(m)}(\mathbf{A}_G)$ and $F^{(m)}(\mathbf{A}_H)$, and by observing that the constants $c_{\tau,a,b}$ only depend on $\tau$, $a$ and $b$,
we can conclude from~(\ref{eq:Fmpertau}) that $\Bigr(I\bigl(F^{(m)}(\mathbf{A}_G)\bigr)\Bigr)_a=\Bigl(I\bigl(F^{(m)}(\mathbf{A}_H)\bigr)\Bigr)_a$ for all $a\in[s_{d+1}]$. In other words, $I\bigl(F^{(d)}(\mathbf{A}_G)\bigr)=I\bigl(F^{(d)}(\mathbf{A}_H)\bigr)$ and thus $G\equiv_F H$, as desired.\hfill\qed

\subsection{Proof of the key lemma}\label{subsec:proofofkeylemma}
Let $t=m(k-1)$ and $t'=(m-1)(k-1)$. We recall that the Key Lemma requires us to show that if $\bar v$ and $\bar w$ satisfy  $\chi_{G,k}^{(t)}(\bar v)=\chi_{H,k}^{(t)}(\bar w)$ and  if $G\equiv_{\wl{k}}^{t'} H$ holds, then
\begin{equation*}
\Bldbl \chi_{G,k}^{(t')}(\bar v')\bigm| (\bar v,\bar v')\in\mu \Brdbl=
\Bldbl \chi_{H,k}^{(t')}(\bar w')\bigm| (\bar w,\bar w') \in \mu\Brdbl \tag{$\ddagger\ddagger$}
\end{equation*}
for any equality pattern $\mu\in[n]^{2k}/_\sim$. 

We will show the equality~($\ddagger\ddagger$) by assuming, for the sake of contradiction, that there exists an equality pattern $\mu$ for which equality~($\ddagger\ddagger$) does not hold. For such a pattern $\mu$, and $k$-tuples $\bar v\in (V(G))^k$ and
$\bar w\in (V(H))^k$, we 
then construct a 
 formula $\varphi(x_1,\ldots,x_k)$ in $\countinglogic{k}$ of quantifier rank at most $t$, such that 
$G\models\varphi[\bar v]$ but $H\not\models\varphi[\bar w]$.
This contradicts $\chi_{G,k}^{(t)}(\bar v)=\chi_{H,k}^{(t)}(\bar w)$
as this implies that $\bar v$ and $\bar w$ satisfy the same formulas in 
 $\countinglogic{k}$ of quantifier rank at most $t$ (cfr.~Theorem~\ref{thm:cfi}). In other words, no equality pattern $\mu$ can exist that violates ($\ddagger\ddagger$).
There will be some special equality patterns for which no formula can be constructed. We treat these cases separately using the assumption $G\equiv_{\wl{k}}^{t'} H$ instead.\looseness=-1

We start by introducing some concepts related to equality patterns. Let $\mu\in [n]^{2k}/_\sim$ and let $\bar v=(v_1,\ldots,v_k)\in (V(G))^k$ and 
$\bar v'=(v_1',\ldots,v_k')\in (V(G))^k$. We represent $\mu$ by its partition $[2k]=I_1\uplus\cdots\uplus I_r$.
For a class $I_s$, with $s\in[r]$, we define $\mathsf{rep}(I_s)$ as the smallest index $i$ in $I_s$. We now distinguish between different kinds of classes. A class $I_s$ is called constant if $\mathsf{rep}(I_s)\leq k$.
 When $\mathsf{rep}(I_s)>k$ we call $I_s$
variable. 
Among constant classes, we further distinguish been constant classes that are used, and those that are not. A constant class $I_s$ is called used when it contains entries strictly larger than $k$. Intuitively, indexes $i>k$ in a used constant class $I_s$ indicate that for $(\bar v,\bar v')$ to be in $\mu$, $v_{i-k}'=v_{\mathsf{rep}(I_s)}$. In other words, those entries in $\bar v'$ take values from $\bar v$. Unused constant classes represent entries in $\bar v$ that must be different from any entry in $\bar v'$.

For notational convenience we introduce $P_{\mu,\bar v}:=\{\bar v'\in (V(G))^k\mid (\bar v,\bar v')\in\mu\}$ and similarly, $Q_{\mu,\bar w}:=\{\bar w'\in (V(H))^k\mid (\bar w,\bar w')\in \mu\}$.
It will be useful to rephrase $\bar v'\in P_{\mu,\bar v}$ in terms of equality and inequality conditions relative to the partition $[2k]=I_1\uplus\cdots\uplus I_r$ of $\mu$. More specifically, $\bar v'\in P_{\mu,\bar v}$ if and only if:
$$\begin{cases}
v'_i= v'_j & \text{for $k+i,k+j\in I_s$, where $I_s$ is a variable or a used constant class;} \hfill\text{(a)}\\
v'_i\neq v'_j & \text{for $k+i\in I_s$, $k+j\in I_{s'}$, $s\neq s'$, where $I_s$ and $I_{s'}$ are either}\\
& \hspace{6.1cm}\hfill \text{variable or used constant classes;}  \hspace{1ex}\hfill\text{(b)}\\
v'_i= v_{\mathsf{rep}(I_s)} & \text{$k+i\in I_s$, where $I_s$ is a used constant class; and} \hfill \text{(c)}\\
v'_i\neq v_{\mathsf{rep}(I_{s'})} & \text{$k+i\in I_{s}$, where $I_{s}$ is a variable class and $I_{s'}$ is a constant but unused class.}\hfill\text{(d)}
\end{cases} 
$$
That is, condition (a) simply states which entries in $\bar v'$ must be the same and condition (c) tells which entries in $\bar v'$ take values from entries in $\bar v$. Moreover, condition (b) states which entries in $\bar v'$ are distinct from each other.
These conditions together imply that any entry in $\bar v'$ belonging to a variable class is necessarily distinct from 
entries in $\bar v$ belonging to a used constant class.  Finally, condition (d) states that  any entry in $\bar v'$ belonging to a variable class should also be distinct from entries in $\bar v$ belonging to an unused constant class.  With this notation, we can rephrase equality~($\ddagger\ddagger$) as
\begin{equation}
\Bldbl \chi_{G,k}^{(t')}(\bar v')\bigm| \bar v'\in P_{\mu,\bar v} \Brdbl=
\Bldbl \chi_{H,k}^{(t')}(\bar w')\bigm| \bar w' \in Q_{\mu,\bar w}\Brdbl,\label{eq:Pset}
\end{equation}
where $\chi_{G,k}^{(t)}(\bar v)=\chi_{H,k}^{(t)}(\bar w)$.
Directly applying our proof strategy, using formulas in $\countinglogic{k}$ of quantifier rank at most $t$, to $k$-tuples in $P_{\mu,\bar v}$ and $Q_{\mu,\bar w}$, is problematic, however, as is illustrated in the following example.

\begin{example}\normalfont
Let $k=3$ and consider the equality pattern $\mu\in[n]^6/_\sim$ represented by $[6]=I_1\uplus I_2\uplus I_3\uplus I_4\uplus I_5$ with 
$I_1:=\{1,4\}$, $I_2:=\{2\}$, $I_3:=\{3\}$, $I_4:=\{5\}$ and $I_6:=\{6\}$. We remark that $I_1$ is the only used constant class with $\mathsf{rep}(I_1)=1$.
The unused constant classes are $I_2$ and $I_3$, and the variables classes are
$I_4$ and $I_5$. For a six-tuple $(\bar v,\bar v')$ to be in $\mu$, all entries in $\bar v=(v_1,v_2,v_3)$ must be pairwise distinct and $\bar v'=(v_1',v_2',v_3')$ is of the form $(v_1,v_2',v_3')$ with $v_2'\neq v_3'$ and
$v_2'$ and $v_3'$ distinct from $v_1$, $v_2$ and $v_3$. Suppose that the equality~(\ref{eq:Pset}) does not hold for our example $\mu$. Assume, for example, that there are more than $m$ triples in $P_{\mu,\bar v}$ of colour $c'$, assigned by $\wl{3}$ in round $t'$, whereas $Q_{\mu,\bar w}$ has less than $m$ such triples. By assumption, we have that $\chi_{G,3}^{(t)}(\bar v)=\chi_{H,3}^{(t)}(\bar w)$ and let us assume that $\wl{3}$ assigns colour $c$ in round $t$ to both these triples. We now intend to use a formula in $\countinglogic{3}$ of quantifier rank at most $t$ that allows us to distinguish $\bar v$ from $\bar w$. As previously mentioned, if we can find such a formula, then we obtain a contradiction to our assumption 
$\chi_{G,3}^{(t)}(\bar v)=\chi_{H,3}^{(t)}(\bar w)$. A candidate formula would be one that is satisfied for any triple $\bar v$ of colour $c$, assigned by $\wl{3}$ in round $t$, and for which there are more than $m$ triples in $P_{\mu,\bar v}$ of colour $c'$, assigned by $\wl{3}$ in round $t'$.
Indeed, by assumption, $\bar v$ would satisfy this formula whereas $\bar w$ would not. To express this as a logical formula one can consider $\varphi(x_1,x_2,x_3)$ defined as
$$
\psi_{c}^{(t)}(x_1,x_2,x_3)\land \Bigl(
\exists^{\geq m}(x_2',x_3')\, \psi_{c'}^{(t')}(x_1,x_2',x_3') \land
x_2'\neq x_3' \land \bigwedge_{i\in[3]} \bigl( x_i\neq x_2' \land x_i\neq x_3'\bigr)
\Bigr),
$$
where $\psi_{c}^{(t)}$ and $\psi_{c'}^{(t')}$ are  $\countinglogic{3}$ formulas expressing that a tuple is assigned colour $c$ and $c'$ by $\wl{3}$ in round $t$ and $t'$, respectively. We note, however, that we use five variables because we need to ensure that $x_2'$ and $x_3'$ are distinct from $x_1$, $x_2$ and $x_3$. What can easily be expressed using three variables, however, is the following:
$$
\varphi(x_1,x_2,x_3):=
\psi_{c}^{(t)}(x_1,x_2,x_3)\land \Bigl(
\exists^{\geq m}(x_2,x_3)\, \psi_{c'}^{(t')}(x_1,x_2,x_3) \land
x_2\neq x_3 \land  x_1\neq x_2 \land x_1\neq x_3\Bigr).
$$
Here, we reused the variables $x_2$ and $x_3$ and require them to be distinct from each other, as before, but now only require them to be distinct from $x_1$, the free variable in the second conjunct. \hfill\qed
\end{example}

As the example shows, we can easily encode (in-)equalities between reused variables and free variables. Intuitively, the free variables correspond to positions belonging to constant used classes. So, instead of considering 
$k$-tuples in $P_{\mu,\bar v}$ and $Q_{\mu,\bar w}$, it seems feasible to detect differences in the number of occurrences of colours of multisets defined in terms if equality and inequality conditions unrelated to unused constant classes. That is, when the condition (d), part of the characterisation of tuples in  $P_{\mu,\bar v}$ and $Q_{\mu,\bar w}$ mentioned earlier, is ignored.

We thus define $\tilde{P}_{\mu,\bar v}$ as $P_{\mu,\bar v}$ but drop condition (d) from the conditions stated above. That is,
$$
\tilde{P}_{\mu,\bar v}:=\{ \bar v'\in (V(G))^k\mid \text{$\bar v'$ satisfies conditions (a), (b) and (c)}\}.
$$
We define $\tilde{Q}_{\mu,\bar v}$ in a similar way. We next show that we can use these sets of tuples to detect whether or not equality~(\ref{eq:Pset}) holds. More precisely, we  show that we can rewrite  $P_{\mu,\bar v}$ in terms of $\tilde{P}_{\mu',\bar v}$ for some patterns $\mu'$, as we will illustrate next.

\begin{example}\normalfont
		For our example $\mu$, consider the variable class $I_4$ and unused constant class $I_2$. Then, we consider 
$\mu[4\mapsto 2]$  represented by $[6]=\{1,4\}\uplus \{2,5\}\uplus \{3\}\uplus \{6\}$,
 	where $\{2,5\}$ is the result of merging $I_4$ and $I_2$ of $\mu$. We note that
 	$$\tilde{P}_{\mu[4\mapsto 2],\bar v}=\{(v_1,v_2,v_3')\in (V(G))^3\mid \text{$v_3'$ is different from $v_1$ and $v_2$}\}.$$
 	We can similarly consider other pairs of variable and unused constant classes. More specifically, we can consider
		$\mu[4\mapsto 3]$, $\mu[5\mapsto 2]$ and $\mu[5\mapsto 3]$ resulting in
 	\begin{align*}
 		\allowdisplaybreaks
 	\tilde{P}_{\mu[4\mapsto 3],\bar v}&:=\{(v_1,v_3,v_3')\in (V(G))^3\mid \text{$v_3'$ is different from $v_1$ and $v_3$}\}\notag\\
 	\tilde{P}_{\mu[5\mapsto 2],\bar v}&:=\{(v_1,v_2',v_2)\in (V(G))^3\mid \text{$v_2'$ is different from $v_1$ and $v_2$}\}\notag\\
 	\tilde{P}_{\mu[5\mapsto 3],\bar v}&:=\{(v_1,v_2',v_3)\in (V(G))^3\mid \text{$v_2'$ is different from $v_1$ and $v_3$}\}. 
 	\end{align*}
	It is now readily verified that 
	\begin{equation*}
		P_{\mu,\bar v}=\tilde{P}_{\mu,\bar v} \setminus \bigl(\tilde{P}_{\mu[4\mapsto 2],\bar v} \cup
		\tilde{P}_{\mu[4\mapsto 3],\bar v} \cup
	\tilde{P}_{\mu[5\mapsto 2],\bar v} \cup	
	\tilde{P}_{\mu[5\mapsto 3],\bar v}\bigr) \tag*{\qed}
	\end{equation*}
\end{example}

The rewriting of $P_{\mu,\bar v}$ in terms of $\tilde{P}_{\mu',\bar v}$ in the previous example holds in general.
\begin{observation}\label{obs:decomp}
	Let $\mu\in[n]^{2k}/_\sim$ be an equality pattern and let $[2k]=I_1\uplus\cdots\uplus I_r$ be its corresponding partition.
	Then,
	$$P_{\mu,\bar v}=\tilde{P}_{\mu,\bar v} \setminus \left(\bigcup_{s,s'} \tilde{P}_{\mu[s\mapsto s'],\bar v}\right)$$
	where $s$ ranges over variables classes $I_s$ and $s'$ ranges over unused constant classes $I_{s'}$.
\end{observation}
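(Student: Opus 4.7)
The plan is to prove the set equality by verifying both inclusions, exploiting the fact that the only difference between the conditions defining $P_{\mu,\bar v}$ and $\tilde{P}_{\mu,\bar v}$ is condition (d), which governs the interaction between entries of $\bar v'$ in variable classes and entries of $\bar v$ in unused constant classes. The key structural fact I will use repeatedly is that merging a variable class $I_s$ with an unused constant class $I_{s'}$ produces, in $\mu[s\mapsto s']$, a \emph{used} constant class whose representative is $\mathsf{rep}(I_{s'})\leq k$, while leaving all other classes unchanged.

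For the inclusion $P_{\mu,\bar v}\subseteq \tilde{P}_{\mu,\bar v}\setminus \bigcup_{s,s'}\tilde{P}_{\mu[s\mapsto s'],\bar v}$, any $\bar v'\in P_{\mu,\bar v}$ satisfies (a), (b), (c), hence lies in $\tilde{P}_{\mu,\bar v}$. For any variable class $I_s$ and unused constant class $I_{s'}$, condition (c) applied to the merged class of $\mu[s\mapsto s']$ would require $v'_i=v_{\mathsf{rep}(I_{s'})}$ for each $k+i\in I_s$; this directly contradicts condition (d) of $P_{\mu,\bar v}$, so $\bar v'\notin \tilde{P}_{\mu[s\mapsto s'],\bar v}$.

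For the reverse inclusion I will argue the contrapositive: if $\bar v'\in \tilde{P}_{\mu,\bar v}\setminus P_{\mu,\bar v}$, then $\bar v'$ satisfies (a), (b), (c) but violates (d), so there exist a variable class $I_s$, an unused constant class $I_{s'}$, and an index $k+i\in I_s$ with $v'_i=v_{\mathsf{rep}(I_{s'})}$. I then show $\bar v'\in \tilde{P}_{\mu[s\mapsto s'],\bar v}$. Conditions (a), (b), (c) for classes other than the merged one are inherited unchanged from $\tilde{P}_{\mu,\bar v}$. For the merged class, condition (a) of $\tilde{P}_{\mu,\bar v}$ gives $v'_j=v'_i$ for every $k+j\in I_s$, so $v'_j=v_{\mathsf{rep}(I_{s'})}$, which is precisely condition (c) at $\mu[s\mapsto s']$; condition (b) between the merged class and any other variable or used constant class holds because it coincides with condition (b) of $\tilde{P}_{\mu,\bar v}$ applied to $I_s$ versus that class in $\mu$.

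The proof is essentially bookkeeping: the only real content is identifying that a failure of condition (d) is witnessed by exactly one merge $\mu[s\mapsto s']$, and that the merge preserves all other structural constraints. The main obstacle, though minor, is tracking that the representative of the merged class remains $\mathsf{rep}(I_{s'})$, so that the constants appearing in condition (c) for $\mu[s\mapsto s']$ match the values that caused the violation of (d) in $\mu$; once this is observed the equivalence is immediate.
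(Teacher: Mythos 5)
Your proposal is correct and follows essentially the same route as the paper: verify both inclusions, observing that the only asymmetry between $P_{\mu,\bar v}$ and $\tilde{P}_{\mu,\bar v}$ is condition (d), and that merging a variable class with an unused constant class turns condition (d) into an instance of condition (c) for the merged pattern. Your treatment of condition (b) for the merged class is in fact cleaner than the paper's (which splits into subcases and appears to have a minor typo), since you correctly note that, because $I_{s'}$ contributes no indices above $k$, condition (b) for $\mu[s\mapsto s']$ involving the merged class reduces verbatim to condition (b) of $\tilde{P}_{\mu,\bar v}$ applied to $I_s$ versus the other class.
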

\begin{proof}
	We first consider the inclusion $P_{\mu,\bar v}\subseteq \tilde{P}_{\mu,\bar v} \setminus \left(\bigcup_{s,s'} \tilde{P}_{\mu[s\mapsto s'],\bar v}\right)$. Let $\bar v'\in P_{\mu,\bar v}$. This implies that $\bar v'$ satisfies
	conditions (a), (b), (c) and (d) relative to $I_1\uplus\cdots\uplus I_r$. We remark that 
	$\bar v'\in \tilde{P}_{\mu,\bar v}$, simply because the latter is defined in terms of conditions (a), (b) and (c) only.
	Suppose, for the sake of contradiction, that there exists a variable class $I_s$ and an unused constant class $I_{s'}$ such that 
	$\bar v'\in\tilde{P}_{\mu[s\mapsto s'],\bar v}$. This implies that $\bar v'$ satisfies conditions (a), (b) and (c) relative to the partition $[2k]=I_{1}\uplus \cdots \uplus I_{s'-1}\uplus I_{s'+1}\uplus \cdots \uplus I_{s-1}\uplus
	I_{s+1}\uplus\cdots\uplus I_r \uplus (I_{s'}\cup I_{s})$, where  $I_{s'}\cup I_{s}$ is now a used constant  class for
	$\mu[s\mapsto s']$. Condition (c) then implies that for $k+i\in I_s$, $v'_i=v_{\mathsf{rep}(I_{s'})}$.
	This, however, contradicts that $\bar v'$ satisfies condition (d) relative to $I_1\uplus\cdots\uplus I_r$. In other words, $\bar v'\not\in  \tilde{P}_{\mu[s\mapsto s'],\bar v}$. Hence, $\bar v'\in \tilde{P}_{\mu,\bar v} \setminus \left(\bigcup_{s,s'} \tilde{P}_{\mu[s\mapsto s'],\bar v}\right)$ and the inclusion follows.
	
	For the other direction, i.e., to show $\tilde{P}_{\mu,\bar v} \setminus \left(\bigcup_{s,s'} \tilde{P}_{\mu[s\mapsto s'],\bar v}\right)\subseteq P_{\mu,\bar v}$, we argue in a similar way. Consider $\bar v'\in\tilde{P}_{\mu,\bar v} \setminus \left(\bigcup_{s,s'} \tilde{P}_{\mu[s\mapsto s'],\bar v}\right)$. Since $\bar v'\in\tilde{P}_{\mu,\bar v}$, this implies that $\bar v'$ satisfies conditions (a), (b) and (c) relative to $I_1\uplus\cdots\uplus I_r$. If we can show that $\bar v'$ also satisfies condition (d) then $\bar v'\in P_{\mu,\bar v}$, 
	as desired. Suppose, for the sake of contradiction, that $\bar v'$ does not satisfy condition (d) relative to
	$I_1\uplus\cdots\uplus I_r$. This implies that there exists a variable  class $I_s$ and an unused constant class $I_{s'}$ such that
    for $k+i\in I_s$,
	$v_i'=v_{\mathsf{rep}(I_{s'})}$. We now argue that $\bar v'\in \tilde{P}_{\mu[s\mapsto s'],\bar v}$, contradicting our assumption. It suffices to verify that $\bar v'$ satisfies conditions (a), (b) and (c) relative to the partition
	$[2k]=I_{1}\uplus \cdots \uplus I_{s'-1}\uplus I_{s'+1}\uplus \cdots \uplus I_{s-1}\uplus
		I_{s+1}\uplus\cdots\uplus I_r \uplus (I_{s'}\cup I_{s})$ corresponding to $\mu[s\mapsto s']$.
 For condition (a), we only need  to consider the new used constant  class $I_{s'}\cup I_s$ since all other used constant  classes in $\mu[s\mapsto s']$ are used constant  classes for $\mu$, for which condition (a) is already satisfied since $\bar v'\in \tilde{P}_{\mu,\bar v}$. Similarly, each variable class for  
		 $\mu[s\mapsto s']$ is equal to a variable class for $\mu$, so condition (a) holds for those already. Hence, we can focus on $I_{s'}\cup I_s$. Take
		 elements $k+i$ and $k+j$ in $I_{s'}\cup I_{s}$. Since $I_{s'}$ only contains elements smaller or equal than $k$ (it is an unused constant class for $\mu$), $k+i,k+j\in I_{s}$. By assumption, $v_i'=v_{\mathsf{rep}(I_{s'})}=v_j'$ and hence condition (a) is satisfied. We remark that this also shows that condition (c) is satisfied for the new used constant  class $I_{s'}\cup I_s$. For condition (b), we need to compare $I_{s'}\cup I_s$ with used constant or variable classes $I_{s''}$. Assume that $I_{s''}$ is a used constant class. We need to show that for any $k+i\in I_{s''}$ and $k+j\in I_{s'}\cup I_s$, $v_i'\neq v_j'$. We note again that $k+j\in I_{s}$. Since $I_{s}$ is a variable class for $\mu$, $\bar v'\in \tilde{P}_{\mu,\bar v}$ and condition (c)
		 is satisfied for $I_1\uplus\cdots\uplus I_r$, $v_i'\neq v_j'$. Suppose next that $I_{s''}$ is a used constant  class. Then, we know that $\mathsf{rep}(I_{s''})\neq \mathsf{rep}(I_{s'})$ and, since for any $k+j\in I_{s}$,
		 $v_j'=v_{\mathsf{rep}(I_{s'})}$,  we have $v_i'=v_{\mathsf{rep}(I_{s''})}\neq  v_j'=v_{\mathsf{rep}(I_{s'})}$ for any $k+i\in I_{s''}$. Hence, $\bar v'\in \tilde{P}_{\mu[s\mapsto s'],\bar v}$, contradicting our assumption. In other words, $\bar v\in P_{\mu,\bar v}$, as desired, and the inclusion follows.
\end{proof}
We note that all of the above holds for $Q_{\mu,\bar w}$ as well. 

We thus have reduced checking equality~(\ref{eq:Pset}) to checking
\begin{equation}\Bldbl \chi_{G,k}^{(t')}(\bar v')\bigm| \bar v'\in \tilde{P}_{\mu,\bar v} \Brdbl=
\Bldbl \chi_{H,k}^{(t')}(\bar w')\bigm| \bar w'\in \tilde{Q}_{\mu,\bar w} \Brdbl,   \label{eq:tildeP}
\end{equation}
for $\bar v\in (V(G))^k$ and $\bar w\in (V(H))^k$ satisfying $\chi_{G,k}^{(t)}(\bar v)=\chi_{H,k}^{(t)}(\bar w)$, and for any equality pattern $\mu\in[n]^{2k}/_\sim$. To use our proof strategy to detect differences in the number of occurrences of colours of $k$-tuples in $\tilde{P}_{\mu,\bar v}$ and $\tilde{Q}_{\mu,\bar w}$ by means of formulas in $\countinglogic{k}$ of quantifier rank at most $t$, we need to overcome one last hurdle, as is illustrated next.

\begin{example}\normalfont
Let $k=3$ and consider the equality pattern $\mu$ represented by
$[6]=I_1\uplus I_2\uplus I_3\uplus I_4\uplus I_5$ with
$I_1:=\{1,5\}$, $I_2:=\{2\}$, $I_3:=\{3\}$, $I_4:=\{4\}$ and $I_{6}:=\{6\}$.
Consider $\bar v=(v_1,v_2,v_3)$ with all its entries pairwise distinct.
For $\bar v'=(v_1',v_2',v_3')$ to be in $\tilde{P}_{\mu,\bar v}$ it has to be
of the form $(v_1',v_1,v_3')$ with $v_1'$ and $v_3'$ pairwise distinct and distinct from $v_1$. Similarly for $\tilde{Q}_{\mu,\bar w}$ with $\bar w=(w_1,w_2,w_3)$ with all its entries pairwise distinct. Assume that $\bar v$
and $\bar w$ are assigned colour $c$ by $\wl{3}$ in round $t$. Suppose that
the equality~(\ref{eq:tildeP}) does not hold for the equality pattern $\mu$ and triples
$\bar v$ and $\bar w$. In particular, we assume again that there are more than $m$ triples in $\tilde{P}_{\mu,\bar v}$ of colour $c'$, assigned by $\wl{3}$ in round $t'$, whereas there are less than $m$ such triples in $\tilde{Q}_{\mu,\bar w}$.
To express this as a logical formula, we can consider:
\begin{multline*}
\varphi(x_1,x_2,x_3):=
\psi_{c}^{(t)}(x_1,x_2,x_3)\land \Bigl(
\exists^{\geq m}(x_1',x_2,x_3)\, \psi_{c'}^{(t')}(x_1',x_2,x_3)\\{} \land
  x_3\neq x_1' \land x_1'\neq x_1 \land x_3\neq x_1\land x_2=x_1\Bigr).
 \end{multline*}
We note, however that we use four variables because we cannot reuse $x_1$ as it needs to be identified with the reused variable $x_2$.\hfill\qed
\end{example}
In order to avoid having to introduce new variables, as in the previous example, we will replace $\tilde{P}_{\mu,\bar v}$ by a permuted version.
Let $\pi$ be a permutation of $[k]$. For an equality pattern
 $\mu\in[n]^{2k}/_\sim$ represented by $I_1\uplus\cdots\uplus I_r$ we define $\pi\star\mu$ as the equality
 pattern in $[n]^{2k}/_\sim$ represented by $\pi\star I_1\uplus\cdots\uplus\pi\star I_r$, where 
 $\pi\star I_s=\{ \pi(i)\mid i\in I_s, i\leq k\}\cup\{ i\in I_s\mid i>k\}$. 
Furthermore, for a $k$-tuple $\bar v=(v_1,\ldots,v_k)$, we define $\pi\star\bar v:=(v_{\pi^{-1}(1)},\ldots,v_{\pi^{-1}(k)})$ and similarly for $\bar w$ and $\pi\star\bar w$. 
 
We first observe that $\chi_{G,k}^{(t)}(\bar v)=\chi_{H,k}^{(t)}(\bar w)$ implies $\chi_{G,k}^{(t)}(\pi\star\bar v)=\chi_{H,k}^{(t)}(\pi\star\bar w)$ for any permutation
$\pi$ of $[k]$. This is a direct consequence of the fact that $\bar v$ and $\bar w$ satisfy the same formulas in $\countinglogic{k}$ of quantifier rank at most $t$.

\begin{observation}\label{obs:permute}
If	$\chi_{G,k}^{(t)}(\bar v)=\chi_{H,k}^{(t)}(\bar w)$, then also $\chi_{G,k}^{(t)}(\pi\star\bar v)=\chi_{H,k}^{(t)}(\pi\star\bar w)$ for any permutation $\pi$ of $[k]$.
\end{observation}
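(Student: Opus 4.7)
The plan is to reduce the observation to Theorem~\ref{thm:cfi} and the attached remark about the defining formulas $\psi_c^{(t)}$: the hypothesis $\chi_{G,k}^{(t)}(\bar v)=\chi_{H,k}^{(t)}(\bar w)$ is equivalent to $\bar v$ in $G$ and $\bar w$ in $H$ satisfying exactly the same $\countinglogic{k}$-formulas of quantifier rank at most $t$, and the conclusion amounts to the same statement with $\pi\star\bar v$ and $\pi\star\bar w$ in place of $\bar v$ and $\bar w$. So it suffices, given any $\varphi(x_1,\ldots,x_k)\in\countinglogic{k}$ with $\mathsf{qr}(\varphi)\leq t$, to translate an assertion about $\pi\star\bar v$ into an equivalent assertion about $\bar v$ that can be transported to the $H$-side via the hypothesis.

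The translation I would use is a simultaneous syntactic renaming $\varphi\mapsto\varphi^\sigma$ for a permutation $\sigma$ of $[k]$: on atomic formulas replace each $x_i$ by $x_{\sigma(i)}$; push the operation through $\neg$ and $\land$ componentwise; and define $(\exists^{\geq r}x_i\,\varphi)^\sigma := \exists^{\geq r} x_{\sigma(i)}\,\varphi^\sigma$. Because free and bound variables are renamed together by the same bijection, no capture occurs, $\varphi^\sigma$ is a well-formed $\countinglogic{k}$-formula, and $\mathsf{qr}(\varphi^\sigma)=\mathsf{qr}(\varphi)$. The key semantic fact to establish by structural induction on $\varphi$ is
$G\models\varphi^\sigma[\beta] \Leftrightarrow G\models\varphi[\beta\circ\sigma]$,
where $\beta\circ\sigma$ denotes the assignment $x_i\mapsto \beta(x_{\sigma(i)})$. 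Only the quantifier case needs thought: modifying $\beta$ at $x_{\sigma(i)}$ to some value $u$ corresponds, after composition with $\sigma$, to modifying $\beta\circ\sigma$ at $x_i$ to the same value $u$, which matches the unfolding of the definition of $\exists^{\geq r}$ on both sides.

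With this in hand, the observation is immediate. Applying the lemma with $\sigma=\pi^{-1}$ and noting that viewing $\pi\star\bar v=(v_{\pi^{-1}(1)},\ldots,v_{\pi^{-1}(k)})$ as an assignment is precisely $\bar v\circ\pi^{-1}$, we obtain $G\models\varphi[\pi\star\bar v]\Leftrightarrow G\models\varphi^{\pi^{-1}}[\bar v]$. Since $\varphi^{\pi^{-1}}$ still has quantifier rank at most $t$, the hypothesis yields $G\models\varphi^{\pi^{-1}}[\bar v]\Leftrightarrow H\models\varphi^{\pi^{-1}}[\bar w]$, and the lemma applied in reverse on the $H$-side gives $H\models\varphi^{\pi^{-1}}[\bar w]\Leftrightarrow H\models\varphi[\pi\star\bar w]$. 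Chaining these three equivalences for every $\varphi$ of quantifier rank at most $t$ and invoking Theorem~\ref{thm:cfi} once more produces $\chi_{G,k}^{(t)}(\pi\star\bar v)=\chi_{H,k}^{(t)}(\pi\star\bar w)$. The only genuine work is the semantic lemma for the quantifier case, and the simultaneous renaming of free and bound variables neutralises the usual capture concern.
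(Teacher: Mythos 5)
Your proposal is correct and follows essentially the same route as the paper: both appeal to the characterisation of $\wl{k}$ colours via $\countinglogic{k}$-formulas of quantifier rank at most $t$ (Theorem~\ref{thm:cfi}), and both introduce a simultaneous variable renaming $x_i\mapsto x_{\pi^{-1}(i)}$ which preserves quantifier rank and commutes with satisfaction when the assignment is correspondingly permuted. The only difference is cosmetic: the paper argues by contradiction from a single defining formula $\psi_{c'}^{(t)}$ and asserts the semantic commutation with ``Clearly,'' whereas you give the direct chain of equivalences for an arbitrary formula $\varphi$ and spell out the structural induction (including the quantifier case and the absence of variable capture), which is a reasonable bit of extra rigour.
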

\begin{proof}
Consider a permutation $\pi:[k]\to[k]$ and suppose, for the sake of contradiction, that
$\chi_{G,k}^{(t)}(\pi\star\bar v)=c'$ and
$\chi_{H,k}^{(t)}(\pi\star \bar w)=c''$ with $c',c''\in\mathcal{C}$ and $c'\neq c''$.
Let $\psi_{c'}^{(t)}(x_1,\ldots,x_k)$ be the $\countinglogic{k}$ formula characterising that
$\wl{k}$ assigns colour $c'$ to $k$-tuples in round $t$. We have that $G\models\psi_{c'}^{(t)}[\pi\star \bar v]$
but $H\not\models\psi_{c'}^{(t)}[\pi\star \bar w]$.
Consider now the formula
$$\pi\star\psi_{c'}^{(t)}(x_1,\ldots,x_k):=\psi_{c'}^{(t)}[x_1/x_{\pi^{-1}(1)},\ldots,x_k/x_{\pi^{-1}(k)}]
$$
obtained from $\psi_{c'}^{(t)}$ by renaming variable $x_i$ by $x_{\pi^{-1}(i)}$. This is again a formula in $\countinglogic{k}$ of quantifier rank at most $t$. Clearly, $G\models\pi\star\psi_{c'}^{(t)}[\bar v]$ if and only if $G\models\psi_{c'}^{(t)}[\pi\star \bar v]$. Similarly, $H\models\pi\star\psi_{c'}^{(t)}[\bar w]$ if and only if $H\models\psi_{c'}^{(t)}[\pi\star \bar w]$.
We may thus conclude that $G\models\pi\star\psi_{c'}^{(t)}[\bar v]$ and $H\not\models\pi\star\psi_{c'}^{(t)}[\bar w]$, contradicting our assumption that  $\chi_{G,k}^{(t)}(\bar v)=\chi_{H,k}^{(t)}(\bar w)$ and thus $\bar v$ and $\bar w$ must
satisfy the same formulas in $\countinglogic{k}$ of quantifier rank at most $t$.
\end{proof}

\begin{remark}\normalfont
	For $k=2$, the observation tells us that $\chi_{G,2}^{(t)}(v_1,v_2)=\chi_{H,2}^{(t)}(w_1,w_2)$ implies 
	$\chi_{G,2}^{(t)}(v_2,v_1)=\chi_{H,2}^{(t)}(w_2,w_1)$. \citet{chen2020graph} infer this by assuming that the graph is undirected. We see, however, that this assumption is not necessary.\hfill\qed
	\end{remark}

We next illustrate how  the permuted versions of $\mu$, $\bar v$ and $\bar w$ come in handy.
\begin{example}\normalfont
	Continuing with the previous example, let $\pi:[3]\to[3]$ be the permutation $1\mapsto 2$, $2\mapsto 1$ and $3\mapsto 3$. Note that $(v_2,v_1,v_3)=\pi\star\bar v$ and 
	$(w_2,w_1,w_3)=\pi\star\bar w$. 
		 Consider the permuted equality pattern $\pi\star\mu$ represented by $\pi\star I_1=\{2,5\}$, $\pi\star I_2=\{1\}$, $\pi\star I_3=\{3\}$, $\pi\star I_4=\{4\}$ and $\pi\star I_5=\{6\}$. Then, for $\bar v'$ to be in 
		 $\tilde{P}_{\pi\star\mu,\pi\star \bar v}$ it has to be of the form 
		 $(v_1',v_1,v_3')$ with $v_1'$ and $v_3'$ pairwise distinct and 
		 $v_1'$ and $v_3'$ different from $v_1$. We thus see that $\tilde{P}_{\pi\star\mu,\pi\star \bar v}=\tilde{P}_{\mu,\bar v}$ for $\tilde{P}_{\mu,\bar v}$ from the previous example. Suppose that
equality~(\ref{eq:tildeP}) does not hold for $\pi\star\mu$ and triples $\pi\star \bar v$ and $\pi\star\bar w$. Since we assume that $\chi_{G,3}^{(t)}(\bar v)=\chi_{H,3}^{(t)}(\bar w)$,
Observation~\ref{obs:permute} implies that $\chi_{G,3}^{(t)}(\pi\star\bar v)=\chi_{H,3}^{(t)}(\pi\star\bar w)$. Let us assume that $\pi\star\bar v$ and $\pi\star\bar w$ are assigned colour $c''$ by $\wl{3}$ in round $t$. Furthermore, we suppose again that $\tilde{P}_{\pi\star\mu,\pi\star \bar v}$ has more than $m$ triples of colour $c'$, assigned by $\wl{3}$ in round $t'$, whereas $\tilde{Q}_{\pi\star\mu,\pi\star \bar w}$ has less than $m$ such triples. We can now use the formula $\varphi(x_1,x_2,x_3)$ defined as
$$
	\psi_{c''}^{(t)}(x_1,x_2,x_3)\land \Bigl(
	\exists^{\geq m}(x_1',x_3')\, \psi_{c'}^{(t')}(x_1',x_2,x_3')\land
	  x_3\neq x_1' \land x_1'\neq x_2 \land x_3\neq x_2\Bigr)
$$
to distinguish $\pi\star\bar v$ from $\pi\star\bar w$. Indeed, by moving to the permuted versions, we can simply use the variable $x_2$ to ensure that triples $\bar v'$ have $v_1$ as second entry, as this is  now the second entry in $\pi\star\bar v=(v_2,v_1,v_3)$.
As a consequence, $G\models\varphi[\pi\star\bar v]$ but $H\not\models\varphi[\pi\star\bar w]$.
Then, similarly as in the proof of Observation~\ref{obs:permute}, we obtain that 
$G\models\pi\star\varphi[\bar v]$ and $H\not\models\pi\star\varphi[\bar w]$, contradicting $\chi_{G,3}^{(t)}(\bar v)=\chi_{H,3}^{(t)}(\bar w)$ as well. \hfill\qed
\end{example}

To carry out the strategy as outlined in the example, we need to find a good permutation
$\pi$ of $[k]$, show that $\tilde{P}_{\mu,\bar v}=\tilde{P}_{\pi\star \mu,\pi\star v}$ (and
thus also $\tilde{Q}_{\mu,\bar w}=\tilde{Q}_{\pi\star \mu,\pi\star w}$), and finally, construct a formula in $\countinglogic{k}$ of quantifier rank at most $t$ that allows us to distinguish $\bar v$ from $\bar w$.

We start by defining when a permutation is good in terms of a property of equality patterns.
More specifically, we say that an equality pattern $\mu\in[n]^{2k}/_\sim$ is ``good'' if it satisfies the following condition, expressed in terms of the partition $[2k]=I_1\uplus\cdots\uplus I_r$ of $\mu$:
\begin{equation*}
	\text{For every used constant  class $I_s$: If $i$ is the smallest index satisfying $k+i\in I_s$, then  $i\in I_s$} \tag{e}
\end{equation*}
Intuitively, this condition corresponds to the requirement that when $\bar v'\in \tilde{P}_{\mu,\bar v}$ and $v_j'=v_{\mathsf{rep}(I_s)}$ for a used constant class $I_s$,
then if $i$ is the smallest such index, i.e., $k+i\in I_s$ and thus $v_i'=v_{\mathsf{rep}(I_s)}$, then
$v_i'$ (and thus also all $v_j'$ with $k+j\in I_s$) can be assumed to be equal to $v_i$, where $v_{\mathsf{rep}(I_s)}=v_i$.

We next show that we can assume that condition (e) holds by replacing $\mu$  by a permutation $\pi\star\mu$ thereof and furthermore, $\tilde{P}_{\mu,\bar v}=\tilde{P}_{\pi\star\mu,\pi\star\bar v}$.

\begin{observation}\label{obs:good}
For every $\mu\in[n]^{2k}/_\sim$ and $\bar v$, there exists a permutation $\pi$ of $[k]$
such that
$\tilde{P}_{\mu,\bar v}=\tilde{P}_{\pi\star\mu,\pi\star\bar v}$ and $\pi\star\mu$ is good. \end{observation}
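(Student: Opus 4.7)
The plan is to construct $\pi$ directly from $\mu$, independently of $\bar v$, by choosing for each used constant class $I_s$ a left-side element whose image under $\pi$ equals the natural right-side representative. Writing the partition of $\mu$ as $[2k]=I_1\uplus\cdots\uplus I_r$, for each used constant class $I_s$ set $a_s:=\mathsf{rep}(I_s)\in I_s\cap[k]$ and let $b_s$ be the smallest $j\in[k]$ with $k+j\in I_s$ (well-defined since $I_s$ is used). The indices $a_s$ are pairwise distinct because the classes $I_s$ are pairwise disjoint, and the indices $b_s$ are pairwise distinct because the witnesses $k+b_s$ belong to distinct classes. Hence the partial assignment $a_s\mapsto b_s$ is a bijection between two equinumerous subsets of $[k]$ and extends to a permutation $\pi$ of $[k]$.

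Checking that $\pi\star\mu$ is good is then essentially automatic from the construction. The operation $I_s\mapsto\pi\star I_s$ preserves the right-side entries of each class and hence also the predicates ``constant'' and ``used'', so the used constant classes of $\pi\star\mu$ are exactly the $\pi\star I_s$ with $I_s$ used constant in $\mu$. For such a class, the smallest $j$ with $k+j\in\pi\star I_s$ is still $b_s$, and by construction $b_s=\pi(a_s)\in\pi(I_s\cap[k])\subseteq\pi\star I_s$, verifying condition~(e).

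The last step is to show $\tilde P_{\mu,\bar v}=\tilde P_{\pi\star\mu,\pi\star\bar v}$. Conditions (a) and (b) refer only to pairs of right-side indices and the way they split among classes, both of which are unchanged by $\pi$, so they transfer verbatim in both directions. The substantive case is condition~(c): for $k+i\in I_s$ (equivalently $k+i\in\pi\star I_s$) with $I_s$ used constant, condition~(c) under $\pi\star\mu$ reads $v'_i=(\pi\star\bar v)_{\mathsf{rep}(\pi\star I_s)}=v_{\pi^{-1}(\mathsf{rep}(\pi\star I_s))}$, and since $\mathsf{rep}(\pi\star I_s)\in\pi(I_s\cap[k])$ we have $\pi^{-1}(\mathsf{rep}(\pi\star I_s))\in I_s\cap[k]$. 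In the regime in which $\tilde P_{\mu,\bar v}$ is used, $\bar v$ respects the left-side equality pattern of $\mu$, so all entries of $\bar v$ indexed by $I_s\cap[k]$ agree, and this value equals $v_{a_s}$, matching exactly what condition~(c) under $\mu$ requires; the reverse inclusion is symmetric.

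The whole argument is really just a combinatorial matching; the only mild bookkeeping point, and the place to stay alert, is that $\mathsf{rep}(\pi\star I_s)$ need not equal $\pi(\mathsf{rep}(I_s))$. This is harmless because the representative is only used through the corresponding entry of $\bar v$, and any two left-side entries of $\bar v$ within the same class coincide.
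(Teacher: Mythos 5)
Your proof is correct and follows essentially the same route as the paper: you construct exactly the same permutation $\pi$ (sending $\mathsf{rep}(I_s)$ to the smallest $j$ with $k+j\in I_s$ for each used constant class $I_s$, then extending arbitrarily), and verify goodness and the equality $\tilde P_{\mu,\bar v}=\tilde P_{\pi\star\mu,\pi\star\bar v}$ in the same way. Your added remarks—that the partial map $a_s\mapsto b_s$ is a bijection between equinumerous subsets of $[k]$ so it extends, and that $\mathsf{rep}(\pi\star I_s)$ need not be $\pi(\mathsf{rep}(I_s))$ but this is harmless because $\bar v$ respects the left-side pattern of $\mu$ wherever the observation is invoked—are both correct and usefully make explicit two points the paper leaves implicit.
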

\begin{proof}
Let us represent $\mu$ by $[2k]=I_1\uplus\cdots\uplus I_r$. 
Consider the following permutation $\pi$ of $[k]$: For each used constant  class $I_s$, we first
define $\pi(i):=j$ with $j$ such that $k+j$ the smallest index in $I_s$ and such that $\mathsf{rep}(I_s)=i$. We then extend $\pi$ to
a permutation of $[k]$ in an arbitrary way.

Let us first show that $\pi\star\mu$ is good, i.e.,  that condition (e) is satisfied. Take a used constant class $\pi\star I_s$ in $\pi\star\mu$ and let $j$ be the smallest index such that $k+j\in \pi\star I_s$. By definition of $\pi\star I_s$, $k+j$ is also the smallest index in $I_s$ larger than $k$. As a consequence, for $i=\mathsf{rep}(I_s)$, $\pi(i)$ is mapped to $j$ by definition of $\pi$. We note that $\pi(i)=j\in \pi\star I_s$, as desired.

Furthermore, to verify $\tilde{P}_{\mu,\bar v}=\tilde{P}_{\pi\star\mu,\pi\star\bar v}$ it suffices to observe that $k+i\in I_s$ if and only if $k+i\in \pi\star I_s$. In other words, classes in $\pi\star \mu$ and $\mu$ agree on indexes larger than $k$. This implies that $k$-tuples in $\tilde{P}_{\mu,\bar v}$ and $\tilde{P}_{\pi\star\mu,\pi\star\bar v}$ satisfy the same conditions (a) and (b). It remains to verify that they also satisfy the same conditions (c). That is, consider a used constant  class $I_s$ and $k+i\in I_s$.
For $\bar v'$ to be in $\tilde{P}_{\mu,\bar v}$, $v'_i=v_{\mathsf{rep}(I_s)}$. 
Similarly, for $\bar v'$ to be in $\tilde{P}_{\pi\star\mu,\pi\star\bar v}$,  $v'_i=(\pi\star v)_{\mathsf{rep}(\pi\star I_s)}$. We show that $v_{\mathsf{rep}(I_s)}=(\pi\star v)_{\mathsf{rep}(\pi\star I_s)}$. Indeed, we observe that $(\pi\star \bar v)_{\mathsf{rep}(\pi\star I_s)}$ is equal to $\bar v_{\pi^{-1}(\mathsf{rep}(\pi\star I_s))}$. Let $j=\mathsf{rep}(\pi\star I_s)$, i.e., $j$ is the smallest index of the form $\pi(j')$ for $j'\in I_s$ with $j'\leq k$. Hence, $\bar v_{\pi^{-1}(\mathsf{rep}(\pi\star I_s))}=v_{j'}$ for some $j'\in I_s$ with $j'\leq k$. As a consequence, $v_i'=v_{j'}=v_{\mathsf{rep}(I_s)}$
since $j'$ and $\mathsf{rep}(I_s)$ both belong to $I_s$. 
\end{proof}

We are now finally ready to conclude the proof of the Key Lemma. Consider $\bar v\in (V(G))^k$ and $\bar w\in (V(H))^k$ satisfying $\chi_{G,k}^{(t)}(\bar v)=\chi_{H,k}^{(t)}(\bar w)$. We have seen earlier, in Observation~\ref{obs:decomp}, that to ensure that equality~($\ddagger\ddagger$) holds, it suffices to verify that equation~(\ref{eq:tildeP}) holds. Furthermore, Observation~\ref{obs:good} tells us that we can find a permutation $\pi$ such that $\pi\star\mu$ is good, and that it suffices to verify that $\chi_{G,k}^{(t)}(\bar v)=\chi_{H,k}^{(t)}(\bar w)$ implies
$$
\Bldbl \chi_{G,k}^{(t')}(\bar v')\bigm| \bar v'\in \tilde{P}_{\pi\star\mu,\pi\star\bar v} \Brdbl=
\Bldbl \chi_{H,k}^{(t')}(\bar w')\bigm| \bar w'\in \tilde{Q}_{\pi\star\mu,\pi\star\bar w} \Brdbl.
$$
Given Observation~\ref{obs:permute}, we can equivalently assume  $\chi_{G,k}^{(t)}(\pi\star\bar v)=\chi_{H,k}^{(t)}(\pi\star\bar w)$ instead of
 $\chi_{G,k}^{(t)}(\bar v)=\chi_{H,k}^{(t)}(\bar w)$.

All combined, it remains to show the following observation. Here, we restrict ourselves to equality patterns that have used constant classes. Equality patterns with only unused constant classes are dealt with afterwards.
\begin{observation} Let $\bar v\in (V(G))^k$ and $\bar w\in(V(H))^k$ satisfying $\chi_{G,k}^{(t)}(\bar v)=\chi_{H,k}^{(t)}(\bar w)$. Let $\mu\in[n]^{2k}/_\sim$ be a good equality pattern with at least one used constant class. Then,
\begin{equation}\Bldbl \chi_{G,k}^{(t')}(\bar v')\bigm| \bar v'\in \tilde{P}_{\mu,\bar v} \Brdbl=
\Bldbl \chi_{H,k}^{(t')}(\bar w')\bigm| \bar w'\in \tilde{Q}_{\mu,\bar w} \Brdbl.  \label{eq:tildePgood}
\end{equation}
\end{observation}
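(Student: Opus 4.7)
The plan is to argue by contradiction using Theorem~\ref{thm:cfi}: suppose some good $\mu \in [n]^{2k}/_\sim$ (with at least one used constant class) violates~(\ref{eq:tildePgood}); swapping $G$ and $H$ if needed, there exist a colour $c' \in \mathcal{C}$ and a threshold $m \geq 1$ such that at least $m$ tuples in $\tilde{P}_{\mu,\bar v}$ carry $\chi_{G,k}^{(t')}$-colour $c'$ while fewer than $m$ tuples in $\tilde{Q}_{\mu,\bar w}$ carry $\chi_{H,k}^{(t')}$-colour $c'$. Writing $c = \chi_{G,k}^{(t)}(\bar v) = \chi_{H,k}^{(t)}(\bar w)$, the goal is to build a formula $\varphi(x_1,\ldots,x_k) \in \countinglogic{k}$ of quantifier rank at most $t$ with $G \models \varphi[\bar v]$ and $H \not\models \varphi[\bar w]$, which contradicts Theorem~\ref{thm:cfi}.

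Let $[2k] = I_1 \uplus \cdots \uplus I_r$ represent $\mu$, with $u$ used constant classes and $\ell$ variable classes. For each variable class $I_s$ set $i_s := \min\{i : k+i \in I_s\}$, and for each used constant class $I_s$ set $j_s := \min\{j : k+j \in I_s\}$; by goodness, $j_s \in I_s$, so $v_{j_s} = v_{\mathsf{rep}(I_s)}$. Since the classes $I_s$ partition $[k+1,2k]$, the indices $\{i_s\}$ and $\{j_s\}$ form $\ell + u$ distinct elements of $[k]$. Define $\sigma : [k] \to [k]$ by $\sigma(j) = i_s$ if $k+j \in I_s$ is a variable class and $\sigma(j) = j_s$ if $k+j \in I_s$ is a used constant class, and set
$$
\varphi := \psi_c^{(t)}(x_1,\ldots,x_k) \land \exists^{\geq m}(x_{i_{s_1}},\ldots,x_{i_{s_\ell}}) \bigl[ \psi_{c'}^{(t')}(x_{\sigma(1)},\ldots,x_{\sigma(k)}) \land \Delta \bigr],
$$
where $s_1,\ldots,s_\ell$ enumerates the variable classes, the $\psi$'s are the $\countinglogic{k}$-formulas characterising $\wl{k}$-colours supplied by Theorem~\ref{thm:cfi}, and $\Delta$ is the conjunction of $x_{i_s} \neq x_{i_{s'}}$ for distinct variable classes and $x_{i_s} \neq x_{j_{s'}}$ for every variable/used-constant pair. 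A routine check shows that the tuples $\bar v' = (x_{\sigma(1)},\ldots,x_{\sigma(k)})$ produced by witnesses to the inner $\exists^{\geq m}$ in $G$ are in bijection with the elements of $\tilde P_{\mu,\bar v}$ of $\chi_{G,k}^{(t')}$-colour $c'$ (conditions (a) and (c) are baked into $\sigma$ and into goodness, condition (b) is $\Delta$), and symmetrically for $H$; thus $G \models \varphi[\bar v]$ and $H \not\models \varphi[\bar w]$.

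The two things left to verify are that $\varphi$ uses only $k$ variables and has rank $\leq t$. Only $x_1,\ldots,x_k$ appear, so the variable bound is clear. For rank, the body inside the $\exists^{\geq m}$ block has rank $t'$, the counting block over $\ell$ variables adds at most $\ell$, and the outer conjunction with $\psi_c^{(t)}$ (rank $t$) yields $\mathsf{qr}(\varphi) \leq \max(t,\, t' + \ell)$. The main obstacle, and the whole point of restricting to equality patterns with at least one used constant class, is bounding $\ell$: since each variable and each used constant class claims at least one distinct position in $[k+1,2k]$, we get $\ell \leq k - u \leq k - 1$, so $t' + \ell \leq (m-1)(k-1) + (k-1) = t$, giving $\mathsf{qr}(\varphi) \leq t$. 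Goodness is precisely what makes a $k$-variable formula achievable: it guarantees that $x_{j_s}$ already stores $v_{\mathsf{rep}(I_s)}$, so the formula encodes every used-constant constraint without spending an additional variable, leaving exactly enough room for the $\ell$ existentially quantified variable-class values.
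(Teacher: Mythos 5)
Your proof is correct and follows essentially the same strategy as the paper: argue by contradiction, isolate a colour $c'$ and threshold $m$ witnessing the discrepancy, and build a $\countinglogic{k}$-formula of quantifier rank at most $t$ separating $\bar v$ from $\bar w$, contradicting Theorem~\ref{thm:cfi}. Your quantifier-rank accounting matches the paper's (at most one representative variable per variable class, and goodness plus the $j_s\in I_s$ condition let the used-constant positions be read off from $\bar v$ without fresh variables; the assumption of at least one used constant class gives $\ell \leq k-1$ and hence $t'+\ell\leq t$). The one place you differ is in how the formula encodes the tuple $\bar v'$: the paper keeps $\psi_{c'}^{(t')}(x_1,\ldots,x_k)$ with all $k$ free variables and adjoins explicit equality conjuncts (e.g.\ $x_j = x_i$ for $k+j\in\mathsf{class}(i)$), whereas you substitute variables into $\psi_{c'}^{(t')}$ via the non-injective map $\sigma$ and drop those equalities from $\Delta$. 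This is slicker, but it silently invokes the fact that a non-injective renaming $\psi(x_{\sigma(1)},\ldots,x_{\sigma(k)})$ of a $\countinglogic{k}$-formula can be rewritten as a $\countinglogic{k}$-formula of the same quantifier rank; naive textual substitution can capture bound variables, and with only $k$ variable names available the rewriting is not entirely immediate (though it is a standard lemma, proved by induction using that a non-surjective $\sigma$ always leaves a spare variable to quantify over). The paper's formulation is designed precisely to sidestep this, by never substituting into $\psi_{c'}^{(t')}$ at all. Your argument is sound, but you should state and justify this renaming lemma rather than treating $\psi_{c'}^{(t')}(x_{\sigma(1)},\ldots,x_{\sigma(k)})\in\countinglogic{k}$ as self-evident.
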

\begin{proof}
Suppose, for the sake of contradiction, that~(\ref{eq:tildePgood}) does not hold.
We assume that $\bar v$ and $\bar w$ are assigned colour $c$ by $\wl{k}$ in round $t$. For the equality~(\ref{eq:tildePgood}) not to hold, we assume  that there are more than $m$ $k$-tuples in $\tilde{P}_{\mu,\bar v}$
of colour $c'$, assigned by $\wl{k}$ in round $t'$, but $\tilde{Q}_{\mu,\bar w}$ has less than $m$ such $k$-tuples. We will express this property by means of a $\countinglogic{k}$ formula of quantifier rank at most $t$.
Let $\mathsf{cidx}$ be the set of indexes $i$ such that $k+i$ is the smallest index (larger than $k$) in a used constant  class $I_s$ of $\mu$. By our assumption that there is at least one used constant class for $\mu$, $\mathsf{cidx}$ is non-empty. 
We denote by $\mathsf{class}(i)$ the used constant  class associated with $i$. We remark that $\mathsf{class}(i)\neq \mathsf{class}(j)$ for $i,j\in\mathsf{cidx}$ and $i\neq j$. Indeed, otherwise $I_s$ contains two smallest distinct entries $k+i$ and $k+j$.
Let $\mathsf{vidx}=\{1,\ldots,k\}\setminus \mathsf{cidx}$. We remark that when $k+i\in I_s$ for a variable class $I_s$, then $i\in\mathsf{vidx}$. Similarly, when $k+i\in I_s$ is a used constant class and $k+i$ is not the smallest such entry, $i\in\mathsf{vidx}$.

Consider now the formula $\varphi(x_1,\ldots,x_k)$ defined as
\begin{multline*}\psi_c^{(t)}(x_1,\ldots,x_k)\land \biggl(\exists^{\geq m}(x_i\mid i\in\mathsf{vidx})\, \Bigl( \psi_{c'}^{(t')}(x_1,\ldots,x_k) \land 
\bigwedge_{\substack{I_s\\\text{variable}}}\!\!\bigwedge_{\substack{\phantom{(i)}k+i,k+j\in I_s\\\phantom{j\neq i}}} x_i'=x_j'\land {}\\	
\bigwedge_{i\in\mathsf{cidx}}\bigwedge_{\substack{k+j\in \mathsf{class}(i)\\j\neq i}}
 x_j'=x_i \land 
 \bigwedge_{\substack{I_s,I_{s'},s\neq s'\\\text{variable}}}\bigwedge_{\substack{k+i\in I_s\\k+j\in I_{s'}}} x_i'\neq x_j'   \land{}\\
 \bigwedge_{i\in\mathsf{cidx}}
   \bigwedge_{\substack{I_s\\ \text{variable}}}\bigwedge_{\substack{k+j\in I_s\\\phantom{k+j\in I_s}}} x_j'\neq x_i
\Bigr)\biggr).
\end{multline*}
Before showing that this formula indeed expresses what we want, we observe that its quantifier rank is at most
$\mathsf{max}\{t,t'+|\mathsf{vidx}|\}$. Indeed, recall from Section~\ref{sec:background} that the sub-formula, using the quantifier $\exists^{\geq m}(x_i\mid i\in\mathsf{vidx})$, is equivalent to a formula in $\countinglogic{k}$ of quantifier at most $t'+|\mathsf{vidx}|$. Since there is at least one used constant class in $\mu$ and $|\mathsf{vidx}|\leq k-1$ and
thus $t'+|\mathsf{vidx}|\leq t'+ k -1= t$, as desired. We further observe that this is a formula only using variables $x_1,\ldots,x_k$, and hence it is in $\countinglogic{k}$.\looseness=-1

We next show that  $G\models\varphi[\bar v]$ whereas $H\not\models\varphi[\bar w]$, contradicting $\chi_{G,k}^{(t)}(\bar v)=\chi_{H,k}^{(t)}(\bar w)$.
To verify $G\models\varphi[\bar v]$ we first observe that $G\models\psi_{c}^{(t)}[\bar v]$ because $\chi_{G,k}^{(t)}(\bar v)=c$. Conversely, $G\models\varphi[\bar v]$ necessarily implies that $G\models\psi_{c}^{(t)}[\bar v]$ and thus $\chi_{G,k}^{(t)}(\bar v)=c$.

For the sub-formula under the quantifier $\exists^{\geq m}(x_i\mid i\in\mathsf{vidx})$,
let $\alpha:\{x_1,\ldots,x_k\}\to V(G)$ be the assignment corresponding to $\bar v$, i.e., $\alpha(x_i)=v_i$. Let $\ell:=|\mathsf{vidx}|$.
If $G\models\varphi[\bar v]$ then this implies that there are more than $m$ $\ell$-tuples $(v_i'\mid i\in\mathsf{vidx})$ in $(V(G))^\ell$ such that 
\begin{multline*}G\models
\psi_{c'}^{(t')}[\alpha(x_i/v_i'\mid i\in\mathsf{vidx})]
 \land \underbrace{\bigwedge_{\substack{I_s\\\text{variable}}}\!\!\bigwedge_{\substack{\phantom{(i)}k+i,k+j\in I_s\\\phantom{j\neq i}}} v_i'=v_j'}_{\text{(i)}}\land 
\underbrace{\bigwedge_{i\in\mathsf{cidx}}\bigwedge_{\substack{k+j\in \mathsf{class}(i)\\j\neq i}}
 v_j'=v_i}_{\text{(ii)}} \land {}
  \\
{}  \underbrace{\bigwedge_{\substack{I_s,I_{s'},s\neq s'\\\text{variable}}}\bigwedge_{\substack{k+i\in I_s\\k+j\in I_{s'}}} v_i'\neq v_j'}_{\text{(iii)}}
  \land
\underbrace{\bigwedge_{i\in\mathsf{cidx}}
  \bigwedge_{\substack{I_s\\ \text{variable}}}\bigwedge_{\substack{k+j\in I_s\\\phantom{k+j\in I_s}}} v_j'\neq v_i}_{\text{(iv)}}
\end{multline*}
holds.
We verify that for each $(v_i'\mid i\in\mathsf{vidx})$ defined above,
the tuple $\bar v'':=\alpha(x_i/v_i'\mid i\in\mathsf{vidx})$
 is a tuple in $\tilde{P}_{\mu,\bar v}$ (here, we identify an assignment with its image).
We verify that conditions (a), (b) and (c) are satisfied for $\mu$. For condition (a),
take $k+i$ and $k+j$ in a variable class $I_s$. We observed before that for such $i$ and $j$, $i,j\in\mathsf{vidx}$ and thus $v_i''=v_i'$ and $v_j''=v_j'$. Hence, the equality conditions $v_i'=v'_j$ in the sub-formula (i) ensure that condition (a) is satisfied for variable classes. Next, take $k+j$  in 
a used constant class $I_s$. Suppose that $\mathsf{class}(i)=I_s$ and thus $i\in\mathsf{cidx}$. To satisfy conditions (a) and (c), we need $v_j''=v_{\mathsf{rep}(I_s)}$. We now observe that $v_i=v_{\mathsf{rep}(I_s)}$ and $v_{j}''=v_j'$ for $j\neq i$. Hence the equalities $v_j'=v_i$ with $k+j\in I_s$ and $j\neq i$ in the sub-formula (ii) ensure that conditions (a) and (c) are satisfied for used constant classes.
Finally, for condition (b) we argue in a similar way. More specifically, consider two distinct variable
classes $I_s$ and $I_{s'}$, and let $k+i\in I_s$ and $k+j\in I_{s'}$. For $\bar v''$ to satisfy condition (b), $v_i''\neq v_j''$. Since $i$ and $j$ are in $\mathsf{vidx}$, the equalities $v_i'=v_j'$ in the sub-formula (iii) ensure that condition (b) is satisfied for
distinct variables classes. Similarly, let $I_s$ be a variable class and $I_{s'}$ are used constant class. Assume that $I_{s'}=\mathsf{class}(i)$. We know from sub-formula (ii) that for all $k+j\in I_{s'}$, $j\neq i$,  $v_j''=v_j'=v_i$. To satisfy condition (b), we need
$v_j''=v_j'$ for  $k+j\in I_s$ to be distinct from any $v_{j''}''$ for  $k+j''\in I_{s'}$. This is ensured by the inequalities $v_j'\neq v_i$ in the sub-formula (iv) since we have $v_{j''}''=v_i$ for all $k+j''\in I_{s'}$. Finally, let $I_s$ and $I_{s'}$ be two distinct used constant classes. Assume that $I_{s}=\mathsf{class}(i)$ and $I_{s'}=\mathsf{class}(j)$. Then the equalities in sub-formula (ii) ensure that for all
$k+i'\in I_{s}$, $i'\neq i$ and $k+j'\in I_{s'}$, $j'\neq j$, $v_{i'}''=v_{i'}'=v_i$ and
$v_{j'}''=v_{j'}'=v_j$. It now suffices to observe that $v_i\neq v_j$ since $i$ and $j$ belong to different used constant classes. Hence. $v_{i'}''\neq v_{j'}''$ as desired by condition (b). As a consequence, $\bar v''\in \tilde{P}_{\mu,\bar v}$. Clearly,
since $G\models\psi_{c'}^{(t')}[\bar v'']$, $\bar v''$ has colour $c'$ assigned by $\wl{k}$ in round $t'$. We may thus conclude that when $G\models\varphi[\bar v]$ that there are more that $m$ $k$-tuples in $\tilde{P}_{\mu,\bar v}$ of colour $c'$, assigned by $\wl{k}$ in round $t$. Conversely, suppose that are more than $m$ such tuples in $\tilde{P}_{\mu,\bar v}$. Then clearly, $G\models\varphi[\bar v]$. The same holds for $H$ and $\bar w$. By assumption, $G\models\varphi[\bar w]$ but $H\not\models\varphi[\bar w]$, contradicting   
$\chi_{G,k}^{(t)}(\bar v)=\chi_{H,k}^{(t)}(\bar w)$. In other words, the equality~(\ref{eq:tildePgood}) must hold.
\end{proof}

In the previous observation we assumed that $\mu$ has at least one used constant class. Indeed, otherwise, we need to existentially quantify over $k$ variables in the constructed formula $\varphi$. We note that when no used constant classes exist, this implies that $\bar v'\in \tilde{P}_{\mu,\bar v}$ if and only if conditions (a) and (b) are satisfied for variables classes. In the following, we assume that $\mu$ has no used constant classes.
\begin{observation}
Let $\mu\in[n]^{2k}/_\sim$ be an equality pattern without used constant classes. 
If  $G\equiv_{\wl{k}}^{t'} H$, then
\begin{equation}
	\Bldbl \chi_{G,k}^{(t')}(\bar v')\bigm| \bar v'\in\tilde{P}_{\mu,\bar v} \Brdbl=
\Bldbl \chi_{H,k}^{(t')}(\bar w')\bigm| \bar w'\in \tilde{Q}_{\mu,\bar w} \Brdbl. \label{eq:finalh}
\end{equation}
for any $\bar v\in (V(G))^k$ and $\bar w\in (V(H))^k$.
\end{observation}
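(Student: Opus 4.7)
The plan is to observe that when $\mu$ has no used constant classes, condition (c) is vacuous and conditions (a), (b) impose only constraints \emph{internal} to $\bar v'$, namely which of its entries must be equal and which must be distinct. Consequently $\tilde{P}_{\mu,\bar v}$ does not actually depend on $\bar v$: it is precisely the set of $k$-tuples in $(V(G))^k$ whose own equality pattern is some fixed $\tau_\mu \in [n]^k/_\sim$ determined by the variable classes of $\mu$. The same reasoning gives $\tilde{Q}_{\mu,\bar w} = \{\bar w' \in (V(H))^k \mid \bar w' \in \tau_\mu\}$. So the statement to prove reduces to
\[
\Bldbl \chi_{G,k}^{(t')}(\bar v') \bigm| \bar v' \in \tau_\mu \Brdbl =
\Bldbl \chi_{H,k}^{(t')}(\bar w') \bigm| \bar w' \in \tau_\mu \Brdbl.
\]

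Next I would exploit the fact that $\chi_{G,k}^{(t')}$ refines the initial colouring $\chi_{G,k}^{(0)}$, which in turn encodes the equality (and isomorphism) type of the $k$-tuple. Hence for any colour $c$ appearing in round $t'$, every $k$-tuple assigned $c$ belongs to the same equality pattern, which I denote $\tau(c)$; moreover, because $\chi_{G,k}^{(0)}$ and $\chi_{H,k}^{(0)}$ use the same colour scheme, this $\tau(c)$ is the same on the $G$-side and on the $H$-side. Thus each of the two multisets above decomposes as a disjoint union, over colours $c$ with $\tau(c) = \tau_\mu$, of the occurrences of $c$ in $\chi_{G,k}^{(t')}$ (respectively $\chi_{H,k}^{(t')}$).

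Finally, the hypothesis $G \equiv_{\wl{k}}^{t'} H$ means precisely that
\[
\Bldbl \chi_{G,k}^{(t')}(\bar v') \bigm| \bar v' \in (V(G))^k \Brdbl =
\Bldbl \chi_{H,k}^{(t')}(\bar w') \bigm| \bar w' \in (V(H))^k \Brdbl,
\]
so for every individual colour $c$ the number of $G$-tuples coloured $c$ equals the number of $H$-tuples coloured $c$. Restricting the sum to those $c$ with $\tau(c)=\tau_\mu$ yields equation~(\ref{eq:finalh}). The only mildly delicate step is the first one, where one must check carefully that with no used constant class present the defining conditions really collapse to an intrinsic equality pattern on $\bar v'$; the rest is a routine bookkeeping argument about refinements, so I do not anticipate any serious obstacle.
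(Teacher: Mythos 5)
Your proposal is correct and follows essentially the same route as the paper: observe that without used constant classes $\tilde{P}_{\mu,\bar v}$ collapses to $\{\bar v'\mid \bar v'\in\tau_\mu\}$ independent of $\bar v$, note that the $\wl{k}$ colouring at round $t'$ refines the initial colouring and hence determines the equality pattern of a tuple, and then restrict the global multiset equality given by $G\equiv_{\wl{k}}^{t'}H$ to the colours whose pattern is $\tau_\mu$. Your colour-by-colour bookkeeping in the last step spells out what the paper states more tersely, but the argument is the same.
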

\begin{proof}
As mentioned above, for $\bar v'$ to be in $\tilde{P}_{\mu,\bar v}$ it simply needs to satisfy $v'_i=v_j'$ whenever
$k+i,k+j\in I_s$ with $I_s$ a variable class, and $v_i'\neq v_j'$ whenever $k+i\in I_s$, $k+j\in I_{s'}$ with $s\neq s'$ and $I_s$ and $I_{s'}$ variables classes. In other words,
due the absence of used constant classes, there is no relationship between $\bar v$ and $\bar v'$. This implies that we replace $\bar v'\in\tilde{P}_{\mu,\bar v}$ by $\bar v'\in\tau$ with $\tau\in[n]^k/_\sim$ represented by $[k]=I_1\uplus\cdots\uplus I_{r'}$ with $I_s:=\{k-i\mid i\in I_s\}$
and $I_s$ a variable class in $\mu$. As a consequence, instead of verifying the equality~(\ref{eq:finalh}) it suffices to verify
$$	\Bldbl \chi_{G,k}^{(t')}(\bar v')\bigm| \bar v'\in\tau\Brdbl=
\Bldbl \chi_{H,k}^{(t')}(\bar w')\bigm| \bar w'\in \tau\Brdbl.
$$
We have observed before, however, that 
$\chi_{G,k}^{(t')}(\bar v')=\chi_{H,k}^{(t')}(\bar w')$ implies that $\bar v'\sim\bar w'$ and thus both $\bar v$ and $\bar w$ belong to $\tau$.
Given that $G\equiv_{\wl{k}}^{t'} H$, or in order words,
\begin{equation}
	\Bldbl \chi_{G,k}^{(t')}(\bar v')\bigm| \bar v'\in (V(G))^k \Brdbl=
\Bldbl \chi_{H,k}^{(t')}(\bar w')\bigm| \bar w'\in (V(H))^k\Brdbl, \label{eq:finalhtau}
\end{equation}
we can indeed infer that the equality~(\ref{eq:finalhtau}) holds, as desired.
 \end{proof}
This concludes the proof of the Key Lemma.\hfill\qed

\section{Conclusion}\label{sec:conclude}
We have shown that $\igns{k}$ are equally expressive as $\wl{k}$ in distinguishing graphs, hereby answering a question raised by \citet{openprob}. As part of the proof, we observe that a single layer of a $\ign{k}$ corresponds to $k-1$ iterations of $\wl{k}$. This may result in $\igns{k}$ to quicker distinguish graphs than $\igns{k}$. The analysis of $\igns{k}$ in terms of equality patterns hints towards equally powerful but less computationally intensive variants of $\igns{k}$ in which certain equality patterns are disallowed. In this way, one can envisage $\igns{k}$ parameterised by a set of allowed equality patterns. In this way, one can obtain $\wl{k}$ and $\igns{k}$ as special cases, and tweak the correspondence between iterations of $\wl{k}$ and layers of $\igns{k}$ as one seems fit.


\begin{thebibliography}{28}
\providecommand{\natexlab}[1]{#1}
\providecommand{\url}[1]{\texttt{#1}}
\expandafter\ifx\csname urlstyle\endcsname\relax
  \providecommand{\doi}[1]{doi: #1}\else
  \providecommand{\doi}{doi: \begingroup \urlstyle{rm}\Url}\fi

\bibitem[Arvind et~al.(2020)Arvind, Fuhlbr{\"{u}}ck, K{\"{o}}bler, and
  Verbitsky]{ARVIND202042}
V.~Arvind, Frank Fuhlbr{\"{u}}ck, Johannes K{\"{o}}bler, and Oleg Verbitsky.
\newblock On {W}eisfeiler-{L}eman invariance: Subgraph counts and related graph
  properties.
\newblock \emph{Journal of Computer and System Sciences}, 113:\penalty0 42 --
  59, 2020.
\newblock URL \url{https://doi.org/10.1016/j.jcss.2020.04.003}.

\bibitem[Azizian \& Lelarge(2020)Azizian and
  Lelarge]{azizian2020characterizing}
Waïss Azizian and Marc Lelarge.
\newblock Characterizing the expressive power of invariant and equivariant
  graph neural networks.
\newblock \emph{CoRR}, abs/2006.15646, 2020.
\newblock URL \url{https://arxiv.org/abs/2006.15646}.

\bibitem[Barcel{\'o} et~al.(2020)Barcel{\'o}, Kostylev, Monet, P{\'e}rez,
  Reutter, and Silva]{barcelo2019logical}
Pablo Barcel{\'o}, Egor~V Kostylev, Mikael Monet, Jorge P{\'e}rez, Juan
  Reutter, and Juan~Pablo Silva.
\newblock The logical expressiveness of graph neural networks.
\newblock In \emph{International Conference on Learning Representations
  {(ICLR)}}, 2020.
\newblock URL \url{https://openreview.net/forum?id=r1lZ7AEKvB}.

\bibitem[Cai et~al.(1992)Cai, F{\"{u}}rer, and Immerman]{CaiFI92}
Jin{-}{Y}i Cai, Martin F{\"{u}}rer, and Neil Immerman.
\newblock An optimal lower bound on the number of variables for graph
  identifications.
\newblock \emph{Combinatorica}, 12\penalty0 (4):\penalty0 389--410, 1992.
\newblock URL \url{https://doi.org/10.1007/BF01305232}.

\bibitem[Chen et~al.(2020)Chen, Chen, Villar, and Bruna]{chen2020graph}
Zhengdao Chen, Lei Chen, Soledad Villar, and Joan Bruna.
\newblock Can graph neural networks count substructures?
\newblock \emph{arXiv}, 2020.
\newblock URL \url{https://arxiv.org/abs/2002.04025}.

\bibitem[Dell et~al.(2018)Dell, Grohe, and Rattan]{DellGR18}
Holger Dell, Martin Grohe, and Gaurav Rattan.
\newblock Lov{\'{a}}sz meets {W}eisfeiler and {L}eman.
\newblock In \emph{Proceedings of the 45th International Colloquium on
  Automata, Languages, and Programming, {(ICALP)}}, volume 107 of
  \emph{LIPIcs}, pp.\  40:1--40:14. Schloss Dagstuhl - Leibniz-Zentrum
  f{\"{u}}r Informatik, 2018.
\newblock URL \url{https://doi.org/10.4230/LIPIcs.ICALP.2018.40}.

\bibitem[F{\"{u}}rer(2017)]{Furer17}
Martin F{\"{u}}rer.
\newblock On the combinatorial power of the {W}eisfeiler-{L}ehman algorithm.
\newblock In \emph{Proceedings of the 10th International Conference on
  Algorithms and Complexity {(CIAC)}}, volume 10236 of \emph{Lecture Notes in
  Computer Science}, pp.\  260--271, 2017.
\newblock URL \url{https://doi.org/10.1007/978-3-319-57586-5\_22}.

\bibitem[Geerts(2019)]{Geerts19}
Floris Geerts.
\newblock On the expressive power of linear algebra on graphs.
\newblock In \emph{Proceedings of the 22nd International Conference on Database
  Theory {(ICDT)}}, volume 127 of \emph{LIPIcs}, pp.\  7:1--7:19. Schloss
  Dagstuhl - Leibniz-Zentrum f{\"{u}}r Informatik, 2019.
\newblock URL \url{https://doi.org/10.4230/LIPIcs.ICDT.2019.7}.

\bibitem[Geerts(2020)]{geerts2020walk}
Floris Geerts.
\newblock Walk message passing neural networks and second-order graph neural
  networks.
\newblock \emph{ArXiv}, 2020.
\newblock URL \url{https://arxiv.org/abs/2006.09499}.

\bibitem[Geerts et~al.(2020)Geerts, Mazowiecki, and P{\'e}rez]{geerts2020let}
Floris Geerts, Filip Mazowiecki, and Guillermo~A P{\'e}rez.
\newblock Let's agree to degree: Comparing graph convolutional networks in the
  message-passing framework.
\newblock \emph{arXiv}, 2020.
\newblock URL \url{https://arxiv.org/abs/2004.02593}.

\bibitem[Gilmer et~al.(2017)Gilmer, Schoenholz, Riley, Vinyals, and
  Dahl]{GilmerSRVD17}
Justin Gilmer, Samuel~S. Schoenholz, Patrick~F. Riley, Oriol Vinyals, and
  George~E. Dahl.
\newblock Neural message passing for quantum chemistry.
\newblock In \emph{Proceedings of the 34th International Conference on Machine
  Learning {(ICML)}}, volume~70, pp.\  1263--1272, 2017.
\newblock URL \url{{http://proceedings.mlr.press/v70/gilmer17a/gilmer17a.pdf}}.

\bibitem[Grohe(2017)]{grohe_2017}
Martin Grohe.
\newblock \emph{Descriptive Complexity, Canonisation, and Definable Graph
  Structure Theory}.
\newblock Lecture Notes in Logic. Cambridge University Press, 2017.
\newblock URL \url{https://doi.org/10.1017/9781139028868}.

\bibitem[Grohe(2020)]{Grohe20}
Martin Grohe.
\newblock word2vec, node2vec, graph2vec, x2vec: Towards a theory of vector
  embeddings of structured data.
\newblock In \emph{Proceedings of the 39th {ACM} {SIGMOD-SIGACT-SIGAI}
  Symposium on Principles of Database Systems {(PODS)}}, pp.\  1--16. {ACM},
  2020.
\newblock URL \url{https://doi.org/10.1145/3375395.3387641}.

\bibitem[Grohe \& Otto(2015)Grohe and Otto]{grohe_otto_2015}
Martin Grohe and Martin Otto.
\newblock Pebble games and linear equations.
\newblock \emph{The Journal of Symbolic Logic}, 80\penalty0 (3):\penalty0
  797–844, 2015.
\newblock URL \url{https://doi.org/10.1017/jsl.2015.28}.

\bibitem[Kipf \& Welling(2017)Kipf and Welling]{kipf-loose}
Thomas~N. Kipf and Max Welling.
\newblock Semi-supervised classification with graph convolutional networks.
\newblock In \emph{International Conference on Learning Representations
  {(ICLR)}}, 2017.
\newblock URL \url{https://openreview.net/forum?id=SJU4ayYgl}.

\bibitem[Kondor et~al.(2018)Kondor, Son, Pan, Anderson, and
  Trivedi]{kondor2018covariant}
Risi Kondor, Hy~Truong Son, Horace Pan, Brandon Anderson, and Shubhendu
  Trivedi.
\newblock Covariant compositional networks for learning graphs.
\newblock In \emph{International Conference on Learning Representations
  {(ICLR)}}, 2018.
\newblock URL \url{https://openreview.net/forum?id=S1TgE7WR-}.

\bibitem[Lichter et~al.(2019)Lichter, Ponomarenko, and Schweitzer]{LichterPS19}
Moritz Lichter, Ilia Ponomarenko, and Pascal Schweitzer.
\newblock Walk refinement, walk logic, and the iteration number of the
  {W}eisfeiler-{L}eman algorithm.
\newblock In \emph{Proceedings of the 34th Annual {ACM/IEEE} Symposium on Logic
  in Computer Science {(LICS)}}, pp.\  1--13, 2019.
\newblock URL \url{https://doi.org/10.1109/LICS.2019.8785694}.

\bibitem[Loukas(2020)]{Loukas2020What}
Andreas Loukas.
\newblock What graph neural networks cannot learn: depth vs width.
\newblock In \emph{International Conference on Learning Representations
  {(ICLR)}}, 2020.
\newblock URL \url{https://openreview.net/forum?id=B1l2bp4YwS}.

\bibitem[Maron et~al.(2019{\natexlab{a}})Maron, Ben-Hamu, and Lipman]{openprob}
Haggai Maron, Heli Ben-Hamu, and Yaron Lipman.
\newblock Open problems: Approximation power of invariant graph networks.
\newblock In \emph{NeurIPS 2019 Graph Representation Learning Workshop},
  2019{\natexlab{a}}.
\newblock URL \url{https://grlearning.github.io/papers/31.pdf}.

\bibitem[Maron et~al.(2019{\natexlab{b}})Maron, Ben{-}Hamu, Serviansky, and
  Lipman]{DBLP:conf/nips/MaronBSL19}
Haggai Maron, Heli Ben{-}Hamu, Hadar Serviansky, and Yaron Lipman.
\newblock Provably powerful graph networks.
\newblock In \emph{Advances in Neural Information Processing Systems 32: Annual
  Conference on Neural Information Processing Systems {(NeurIPS)}}, pp.\
  2153--2164, 2019{\natexlab{b}}.
\newblock URL
  \url{http://papers.nips.cc/paper/8488-provably-powerful-graph-networks}.

\bibitem[Maron et~al.(2019{\natexlab{c}})Maron, Ben-Hamu, Shamir, and
  Lipman]{maron2018invariant}
Haggai Maron, Heli Ben-Hamu, Nadav Shamir, and Yaron Lipman.
\newblock Invariant and equivariant graph networks.
\newblock In \emph{International Conference on Learning Representations
  {(ICLR)}}, 2019{\natexlab{c}}.
\newblock URL \url{https://openreview.net/forum?id=Syx72jC9tm}.

\bibitem[Morris et~al.(2019)Morris, Ritzert, Fey, Hamilton, Lenssen, Rattan,
  and Grohe]{grohewl}
Christopher Morris, Martin Ritzert, Matthias Fey, William~L. Hamilton, Jan~Eric
  Lenssen, Gaurav Rattan, and Martin Grohe.
\newblock Weisfeiler and {L}eman go neural: Higher-order graph neural networks.
\newblock In \emph{Proceedings of the 33rd {AAAI} Conference on Artificial
  Intelligence {(AAAI)}}, pp.\  4602--4609, 2019.
\newblock URL \url{https://doi.org/10.1609/aaai.v33i01.33014602}.

\bibitem[NT \& Maehara(2020)NT and Maehara]{nt2020graph}
Hoang NT and Takanori Maehara.
\newblock Graph homomorphism convolution.
\newblock \emph{arXiv}, 2020.
\newblock URL \url{https://arxiv.org/abs/2005.01214}.

\bibitem[Sato(2020)]{Sato2020ASO}
Ryoma Sato.
\newblock A survey on the expressive power of graph neural networks.
\newblock \emph{arXiv}, 2020.
\newblock URL \url{https://arxiv.org/abs/2003.04078}.

\bibitem[Sato et~al.(2019)Sato, Yamada, and Kashima]{sato2019approximation}
Ryoma Sato, Makoto Yamada, and Hisashi Kashima.
\newblock Approximation ratios of graph neural networks for combinatorial
  problems.
\newblock In \emph{Advances in Neural Information Processing Systems 32: Annual
  Conference on Neural Information Processing Systems {(NeurIPS)}}, pp.\
  4083--4092, 2019.

\bibitem[Sato et~al.(2020)Sato, Yamada, and Kashima]{sato2020random}
Ryoma Sato, Makoto Yamada, and Hisashi Kashima.
\newblock Random features strengthen graph neural networks.
\newblock \emph{arXiv}, 2020.
\newblock URL \url{https://arxiv.org/abs/2002.03155}.

\bibitem[Scarselli et~al.(2009)Scarselli, Gori, Tsoi, Hagenbuchner, and
  Monfardini]{scarselli2008graph}
Franco Scarselli, Marco Gori, Ah~Chung Tsoi, Markus Hagenbuchner, and Gabriele
  Monfardini.
\newblock The graph neural network model.
\newblock \emph{{IEEE} Trans. Neural Networks}, 20\penalty0 (1):\penalty0
  61--80, 2009.
\newblock URL \url{https://doi.org/10.1109/TNN.2008.2005605}.

\bibitem[Xu et~al.(2019)Xu, Hu, Leskovec, and Jegelka]{xhlj19}
Keyulu Xu, Weihua Hu, Jure Leskovec, and Stefanie Jegelka.
\newblock How powerful are graph neural networks?
\newblock In \emph{International Conference on Learning Representations
  {(ICLR)}}, 2019.
\newblock URL \url{https://openreview.net/forum?id=ryGs6iA5Km}.

\end{thebibliography}
\end{document}